\title{Discriminator Contrastive Divergence: Semi-Amortized Generative Modeling by Exploring Energy of the Discriminator}
\author{%
  Yuxuan Song*$^{1}$  Qiwei Ye*$^{2}$ Minkai Xu*$^{1}$ 
  Tie-Yan Liu$^{2}$\\
  $^{1}$Shanghai Jiao Tong University $^{2}$Microsoft Research\\
  \texttt{\{songyuxuan,mkxu\}@apex.sjtu.edu.cn}, \texttt{\{qiwye,tie-yan.liu\}@microsoft.com}
}
\newcommand{\tpdv}[2]{\frac{\partial{#1}}{\partial{#2}}}
\newtheorem{theorem}{Theorem}
\newtheorem{proposition}{Proposition}
\newtheorem{lemma}{Lemma}
\def\eqref#1{equation~\ref{#1}}
\def\1{\bm{1}}
\def\gL{{\mathcal{L}}}
\newcommand{\KL}{D_{\mathrm{KL}}}
\DeclareMathOperator*{\argmin}{arg\,min}
\newcommand{\bb}[1]{{\mathbb{#1}}}
\newcommand{\defeq}{\vcentcolon=}
\begin{document}

\maketitle

\renewcommand{\thefootnote}{\fnsymbol{footnote}}
\footnotetext[1]{Equal contribution, with the order determined by flipping coins.}
\renewcommand{\thefootnote}{\arabic{footnote}}

\begin{abstract}    
    Generative Adversarial Networks (GANs) have shown great promise in modeling high dimensional data. The learning objective of GANs usually minimizes some measure discrepancy, \textit{e.g.}, $f$-divergence~($f$-GANs~\cite{nowozin2016f}) or Integral Probability Metric~(Wasserstein GANs~\cite{arjovsky2017wasserstein}). With $f$-divergence as the objective function, the discriminator essentially estimates the density ratio~\cite{uehara2016generative}, and the estimated ratio proves useful in further improving the sample quality of the generator~\cite{azadi2018discriminator,turner2018metropolis}. However, how to leverage the information contained in the discriminator of Wasserstein GANs (WGAN)~\cite{arjovsky2017wasserstein} is less explored.  In this paper, we introduce the Discriminator Contrastive Divergence, which is well motivated by the property of WGAN's discriminator and the relationship between WGAN and energy-based model. Compared to standard GANs, where the generator is directly utilized to obtain new samples, our method proposes a semi-amortized generation procedure where the samples are produced with the generator's output as an initial state. Then several steps of Langevin dynamics are conducted using the gradient of the discriminator. We demonstrate the benefits of significant improved generation on both synthetic data and several real-world image generation benchmarks.\footnote{Code is available at \url{https://github.com/MinkaiXu/Discriminator-Contrastive-Divergence}.}
\end{abstract}

\section{Introduction}

Generative Adversarial Networks (GANs)~\cite{goodfellow2014generative} proposes a widely popular way to learn likelihood-free generative models, which have shown promising results on various challenging tasks. Specifically, GANs are learned by finding the equilibrium of a min-max game between a generator and a discriminator, or a critic under the context of WGANs. Assuming the optimal discriminator can be obtained, the generator substantially minimizes some discrepancy between the generated distribution and the target distribution. 

Improving training GANs by exploring the discrepancy measure with the excellent property has stimulated fruitful lines of research works and is still an active area. Two well-known discrepancy measures for training GANs are $f$-divergence and Integral Probability Metric (IPM)~\cite{muller1997integral}. $f$-divergence is severe for directly minimization due to the intractable integral, $f$-GANs provide minimization instead of a variational approximation of $f$-divergence between the generated distribution $p_{G_\theta}$ and the target distribution $p_{\text{data}}$. The discriminator in $f$-GANs serves as a density ratio estimator~\cite{uehara2016generative}.
The other families of GANs are based on the minimization of an Integral Probability Metric (IPM). According to the definition of IPM, the critic needs to be constrained into a specific function class. When the critic is restricted to be 1-Lipschitz function, the corresponding IPM turns to the Wasserstein-1 distance, which inspires the approaches of  Wasserstein GANs~(WGANs)~\cite{miyato2018spectral,arjovsky2017wasserstein,gulrajani2017improved}.

No matter what kind of discrepancy is evaluated and minimized, the discriminator is usually discarded at the end of the training, and only the generator is kept to generate samples. A natural question to ask is whether, and how we can leverage the remaining information in the discriminator to construct a more superior distribution than simply sampling from a generator. 

Recent work ~\cite{azadi2018discriminator,turner2018metropolis} has shown that a density ratio can be obtained through the output of discriminator, and a more superior distribution can be acquired by conducting rejection sampling or Metropolis-Hastings sampling with the estimated density ratio based on the original GAN~\cite{goodfellow2014generative}.

However, the critical limitation of previous methods lies in that they can not be adapted to WGANs, which enjoy superior empirical performance over other variants. How to leverage the information of a WGAN's critic model to improve image generation remains an open problem. In this paper, we do the following to address this: 

\begin{itemize}
\vspace{-3pt}
    \item We provide a generalized view to unify different families of GANs by investigating the informativeness of the discriminators.
    \vspace{-1pt}
    \item We propose a semi-amortized generative modeling procedure so-called discriminator contrastive divergence~(DCD), which achieves an intermediate between implicit and explicit generation and hence allows a trade-off between generation quality and speed.
\vspace{-3pt}
\end{itemize}
Extensive experiments are conducted to demonstrate the efficacy of our proposed method on both synthetic setting and real-world generation scenarios, which achieves state-of-the-art performance on several standard evaluation benchmarks of image generation.

\section{Related Works}
Both empirical~\cite{arjovsky2017wasserstein} and theoretical~\cite{heusel2017gans} evidence has demonstrated that learning a discriminative model with neural networks is relatively easy, and the neural generative model(sampler) is prone to reach its bottleneck during the optimization. Hence, there is strong motivation to further improve the generated distribution by exploring the remaining information. Two recent advancements are discriminator rejection sampling(DRS)~\cite{azadi2018discriminator} and MH-GANs~\cite{turner2018metropolis}. DRS conducts rejection sampling on the output of the generator. The vital limitation that lies in the upper bound of $D_\phi$ is needed to be estimated for computing the rejection probability. MH-GAN sidesteps the above problem by introducing a Metropolis-Hastings sampling procedure with generator acting as the independent proposal; the state transition is estimated with a well-calibrated discriminator. However, the theoretical justification of both the above two methods is based on the fact that the output of discriminator needs to be viewed as an estimation of density ratio $\frac{p_{\text{data}}}{p_{G_\theta}}$. As pointed out by previous work~\cite{zhou2019lipschitz}, the output of a discriminator in WGAN~\cite{arjovsky2017wasserstein} suffers from the free offset and can not provide the density ratio, which prevents the application of the above methods in WGAN. 

Our work is inspired by recent theoretical studies on the property of discriminator in WGANs~\cite{gulrajani2017improved,zhou2019lipschitz}. \cite{tanaka2019discriminator} proposes discriminator optimal transport~(DOT) to leverage the optimal transport plan implied by WGANs' discriminator, which is orthogonal to our  method. 
Moreover, turning the discriminator of WGAN into an energy function is closely related to the amortized generation methods in the context of the energy-based model (EBM)~\cite{kim2016deep,zhao2016energy,kumar2019maximum} where a separate network is proposed to learn to sample from the partition function in~\cite{finn2016connection}. Recent progress~\cite{song2019generative,du2019implicit} in the area of EBM has shown the feasibility of generating high dimensional data with Langevin dynamics. From the perspective of EBM, our proposed method can be seen as an intermediary between an amortized generation model and an implicit generation model, \emph{i.e.}, a semi-amortized generation method, which allows a trade-off between speed and flexibility of generation.  With a similar spirit, \cite{grathwohl2019your}  also illustrates the potential connection between neural classifier and energy-based model in supervised and semi-supervised scenarios.  
\section{Preliminaries}
\subsection{Generative Adversarial Networks}
\label{GANs}
Generative Adversarial Networks (GANs)~\cite{goodfellow2014generative} is an implicit generative model that aims to fit an empirical data distribution $p_{\text{data}}$ over sample space $\mathcal{X}$. The generative distribution $p_{G_\theta}$ is implied by a generated function $G_\theta$, which maps latent variable $Z$ to sample $X$, \emph{i.e.}, $G_\theta : \mathcal{Z} \xrightarrow{} \mathcal{X}$. Typically, the latent variable $Z$ is distributed on a fixed  prior distribution $p(z)$. With i.i.d samples available from $p_{G_\theta}$ and $p_{\text{data}}$, the GAN typically learns the generative model through a min-max game between a discriminator $D_\phi$ and a generator $G_\theta$:
\begin{equation}
    \label{gan_obj}
     \min _{\theta} \max _{\phi} \mathbb{E}_{\boldsymbol{x} \sim P_{\text {data }}}\left[r(D_{\phi}(\boldsymbol{x}))\right]-\mathbb{E}_{\boldsymbol{x} \sim p_{G_{\theta}}}\left[m(D_{\phi}(\boldsymbol{x}))\right].
\end{equation}
With $r$ and $m$ as the function $r(x) = m(x) = x$ and the $D_\phi(x)$ is constrained as 1-Lipschitz function, the Eq.~\ref{gan_obj} yields the WGANs objective which essentially minimizes the Wasserstein distance between $p_{\text{data}}$ and $p_{G_\theta}$. 
With $r(x)=x$ and $m(x)$ as the Fenchel conjugate\cite{hiriart2012fundamentals} of a convex and lower-semicontinuous function, the objective in Eq.~\ref{gan_obj} approximately minimize a variational estimation of $f$-divergence\cite{nowozin2016f} between $p_{\text{data}}$ and $p_{G_\theta}$.

\subsection{Energy Based Model and MCMC basics}
\label{A}
The energy-based model tends to learn an unnormalized probability model implied by an energy function $E_\theta(x)$ to prescribe the ground truth data distribution $p_{\text{data}}$. The corresponding normalized density function is:
\begin{align}
    \label{partition_function}
    q_\theta(x) = \frac{e^{-E_\theta(x)}}{Z_{\theta}},~~~~~ Z_\theta = \int e^{-E_\theta(x)} \mathrm{d} x,
\end{align}
where $Z_\theta$ is so-called normalization constant.  
The objective of training an energy-based model with maximum likelihood estimation is as:
\begin{align}
    \label{mle}
    \gL_{\mathrm{MLE}}(\theta; p) \defeq -\bb{E}_{x \sim p_{\text{data}}(x)}\left[\log q_\theta(x)\right].
\end{align}
The estimated gradient with respect to the MLE objective is as follows:
\begin{align}
\label{cd}
    & \nabla_\theta \gL_{\mathrm{MLE}}(\theta; p) \\ \nonumber
    =\ & \nabla_\theta \bb{E}_{x \sim p_{\text{data}}(x)}[E_\theta(x)] - \frac{\int e^{-E_\theta(x)} \nabla_\theta E_\theta(x) \mathrm{d} x}{Z_\theta} \\ \nonumber
    =\ & \bb{E}_{x \sim p_{\text{data}}(x)}[\nabla_\theta E_\theta(x)] - \bb{E}_{x \sim q_\theta(x)}[\nabla_\theta E_\theta(x)]\nonumber\label{eq:contrastive-divergence}.
\end{align}
The above method for gradient estimation in Equation~\ref{cd} is called contrastive divergence~(CD). 
Furthermore, we define the \emph{score} of distribution with density function $p(x)$ as $\nabla_x \log p(x)$. We can immediately conclude that $\nabla_x \log q_\theta(x) = \nabla E_\theta(x)$, which does not depend on the intractable $Z_\theta$.

Markov chain Monte Carlo is a powerful framework for drawing samples from a given distribution. An MCMC is specified by a transition kernel $\mathcal{K}(x^{\prime}|x)$ which corresponds to a unique stationary distribution $p$, \emph{i.e.},
\begin{align*}
    q=p \quad \Leftrightarrow \quad q(x)=\int q\left(x^{\prime}\right) \mathcal{K}\left(x | x^{\prime}\right) d x^{\prime}, \quad \forall x.
\end{align*}
More specifically, MCMC can be viewed as drawing $x_0$ from the initial distribution $x_0$ and iteratively get sample $x_t$ at the $t$-th iteration by applied the transition kernel on the previous step, \emph{i.e.}, $x_t|x_{t-1} \sim \mathcal{K}(x_t|x_{t-1})$. Following \cite{li2017approximate}, we formalized the distribution $q_t$ of $z_t$ as obtained by a fixed point update of form $q_{t}(x) \leftarrow \mathcal{K} q_{t-1}(x)$, and $\mathcal{K} q_{t-1}(x)$:
\begin{align*}
  \mathcal{K} q_{t-1}(x):=\int q_{t-1}\left(x^{\prime}\right) \mathcal{K}\left(x | x^{\prime}\right) d x^{\prime}.
\end{align*}
As indicated by the standard theory of MCMC, the following monotonic property is satisfied:
\begin{equation}
\label{MCMC_monotic}
        \KL(q_t||p) \leq \KL(q_{t-1}||p).
\end{equation}
And $q_t$ converges to the stationary distribution $p$ as $t \rightarrow \infty$.

\section{Methodology}
\label{sec:methodology}
\subsection{Informativeness of Discriminator}\label{sec::iod}
In this section, we seek to investigate the following questions:
\begin{itemize}
\vspace{-3pt}
    \item What kind of information is contained in the discriminator of different kinds of GANs?
    \vspace{-1pt}
    \item Why and how can the information be utilized to further improved the quality of generated distribution?
\vspace{-3pt}
\end{itemize}
We discuss the discriminator of $f$-GANs, and WGANs, respectively, in the following.

\subsubsection{$f$-GAN Discriminator}
\label{f-GAN-d}
$f$-GAN~\cite{nguyen2010estimating} is based on the variational estimation of $f$-divergence~\cite{ali1966general} with only samples from two distributions available:
\begin{theorem}\cite{nguyen2010estimating}
\label{fgan}
With Fenchel Duality, the variational estimation of $f$-divergence can be illustrated as follows:
\begin{align}
&D_{f}(P \| Q) \\
=\ &\int_{\mathcal{X}} q(x) \sup _{t \in \text { dom }_{f^{*}}}\left\{\left(t\frac{p(x)}{q(x)}-f^{*}(t)\right) \mathrm{d} x\right\} \nonumber\\  
\geq\ & \sup _{T \in \mathcal{T}}\left(\int_{\mathcal{X}} p(x) T(x) \mathrm{d} x-\int_{\mathcal{X}} q(x) f^{*}(T(x)) \mathrm{d} x\right) \nonumber\\ =\ &\sup _{T \in \mathcal{T}}\left(\mathbb{E}_{x \sim P}[T(x)]-\mathbb{E}_{x \sim Q}\left[f^{*}(T(x))\right]\right), \nonumber
\end{align}
where the $\mathcal{T}$ is the arbitrary class of function and $f^{*}$ denotes the Fenchel conjugate of $f$. And the supremum is achieved only when $T^{*}(x)=f^{\prime}\left(\frac{p(x)}{q(x)}\right)$, \emph{i.e.} $\frac{p(x)}{q(x)} = \frac{\partial f^{*}}{\partial T}(T^{*}(x))$.
\end{theorem}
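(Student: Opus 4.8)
The plan is to obtain the theorem from the Fenchel--Moreau biconjugate identity applied pointwise in $x$, followed by an interchange of supremum and integration, and finally the first-order optimality condition for a conjugate pair.

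First, I would use that a convex, lower semicontinuous (closed, proper) $f$ equals its biconjugate, so $f(u) = \sup_{t \in \mathrm{dom}_{f^*}} \{ t u - f^*(t) \}$ for every $u \in \mathrm{dom}_f$. Substituting $u = p(x)/q(x)$ and multiplying through by $q(x)$ gives, for $Q$-almost every $x$,
\[ q(x)\, f\!\left(\frac{p(x)}{q(x)}\right) = q(x) \sup_{t \in \mathrm{dom}_{f^*}} \left\{ t\,\frac{p(x)}{q(x)} - f^*(t) \right\}. \]
Integrating over $\mathcal{X}$ and recalling the definition $D_f(P\|Q) = \int_{\mathcal{X}} q(x)\, f(p(x)/q(x))\, \mathrm{d}x$ of the $f$-divergence reproduces the first line of the claimed chain.

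Second, for the inequality I would fix any measurable $T \in \mathcal{T}$ taking values in $\mathrm{dom}_{f^*}$. By definition of the supremum, the pointwise estimate
\[ q(x)\, f\!\left(\frac{p(x)}{q(x)}\right) \;\geq\; q(x)\left( T(x)\,\frac{p(x)}{q(x)} - f^*(T(x)) \right) = p(x) T(x) - q(x) f^*(T(x)) \]
holds for almost every $x$. Integrating and then taking the supremum over $T \in \mathcal{T}$ gives
\[ D_f(P\|Q) \;\geq\; \sup_{T \in \mathcal{T}}\left( \int_{\mathcal{X}} p(x) T(x)\, \mathrm{d}x - \int_{\mathcal{X}} q(x) f^*(T(x))\, \mathrm{d}x \right) = \sup_{T \in \mathcal{T}}\left( \mathbb{E}_{x \sim P}[T(x)] - \mathbb{E}_{x \sim Q}[f^*(T(x))] \right), \]
where the inequality can be strict precisely because the arbitrary class $\mathcal{T}$ need not contain the pointwise maximizer of the inner problem. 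Third, I would identify that maximizer via the conjugate optimality condition: the supremum defining $f(u)$ is attained at $t$ if and only if $u \in \partial f^*(t)$, equivalently $t \in \partial f(u)$; for differentiable $f$ this reads $t = f'(u)$, and since $f'$ and $(f^*)'$ are mutually inverse, equivalently $u = (f^*)'(t)$. Taking $u = p(x)/q(x)$ shows the optimal critic is $T^*(x) = f'(p(x)/q(x))$, i.e.\ $\frac{p(x)}{q(x)} = \frac{\partial f^*}{\partial T}(T^*(x))$, and whenever $T^* \in \mathcal{T}$ the displayed inequality becomes an equality.

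The main obstacle is the measure-theoretic bookkeeping rather than any isolated algebraic step: one must exhibit a measurable selection $x \mapsto T^*(x)$ realizing the pointwise supremum, verify the integrability needed to pull the supremum outside the integral (monotone/dominated convergence once a measurable selection is in hand, or an approximation of $\mathcal{T}$ by sequences), and handle the degenerate cases where $q(x) = 0$ or $p(x)/q(x)$ lies on the boundary of $\mathrm{dom}_f$. Under the standard regularity assumptions adopted in this literature, these points are routine and the three steps above combine to give the stated result.
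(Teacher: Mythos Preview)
Your proposal is correct and follows the standard route: Fenchel--Moreau biconjugation applied pointwise to $u=p(x)/q(x)$, then the trivial direction of the sup-integral swap, then the conjugate first-order condition $t\in\partial f(u)\Leftrightarrow u\in\partial f^*(t)$ to identify $T^*$. The measure-theoretic caveats you flag (measurable selection, integrability, boundary of $\mathrm{dom}_f$) are exactly the right ones.

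However, note that the paper does \emph{not} supply its own proof of this theorem. It is stated with the attribution \cite{nguyen2010estimating} and used as a background result; the appendix only proves Theorem~\ref{gradient_direction}, Theorem~\ref{opt_fail}, and Proposition~\ref{prop1}. So there is no in-paper argument to compare against. Your derivation is essentially the one given in the cited source (Nguyen, Wainwright, Jordan) and in \cite{nowozin2016f}, and would serve as a complete proof here.
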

In $f$-GAN~\cite{nowozin2016f}, the discriminator $D_\phi$ is actually the function $T$ parameterized with neural networks. Theorem.~\ref{fgan} indicates the density ratio estimation view of $f$-GAN's discriminator, as illustrated in \cite{uehara2016generative}. More specifically, the discriminator$D_\phi$ in $f$-GAN is  optimized to estimate a statistic related to the density ratio between $p_{\text{data}}$ and $p_{G_\theta}$, \emph{i.e.} $\frac{p_{\text{data}}}{p_{G_\theta}}$, and the  $\frac{p_{\text{data}}}{p_{G_\theta}}$ can be acquired easily with $D_\phi$. For example, in the original GANs~\cite{goodfellow2014generative}, the corresponding $f$ in $f$-GAN literature is $f(x) = x\log x - (x+1)\log(x+1) + 2\log2 $. Assuming the discriminator is trained to be optimal, the output is $D_\phi(x) = \frac{p_{\text{data}}}{p_{\text{data}}+p_{G_\theta}}$, and we can get the density ratio $\frac{p_{\text{data}}}{p_{G_\theta}} = \frac{D_\phi(x)}{1-D_\phi(x)}$. However, it should be noticed that the discriminator is hard to reach the optimality. In practice, without loss of generality, the density ratio implied by a sub-optimal discriminator can be seen as the density ratio between an  implicitly defined distribution $p^{*}$ and the generated distribution $p_{G_\theta}$. It has been studied both theoretically and empirically in the context of GANs~\cite{arjovsky2017wasserstein,heusel2017gans,hjelm2017boundary}, with the same inductive bias, that learning a discriminative model is more accessible than a generative model. Based on the above fact, the rejection-sampling based methods are proposed to use the estimated density ratio, \emph{e.g.}, $\frac{D_\phi(x)}{1-D_\phi(x)}$ in original GANs, to conduct rejection sampling\cite{azadi2018discriminator} or Metropolis-Hastings sampling\cite{turner2018metropolis} based on generated distribution $p_{G_\theta}$. These methods radically modify the generated distribution $p_{G_\theta}$ to $p^{*}$, the improvement in empirical performance as shown in \cite{azadi2018discriminator,turner2018metropolis} demonstrates that we can construct a superior distribution  $p^{*}$  to prescribe the empirical distribution $p_{\text{data}}$ by involving the remaining information in discriminator. 

\begin{figure*}[!t]\label{fig::overview}
	\centering
    \includegraphics[width=1.0\linewidth]{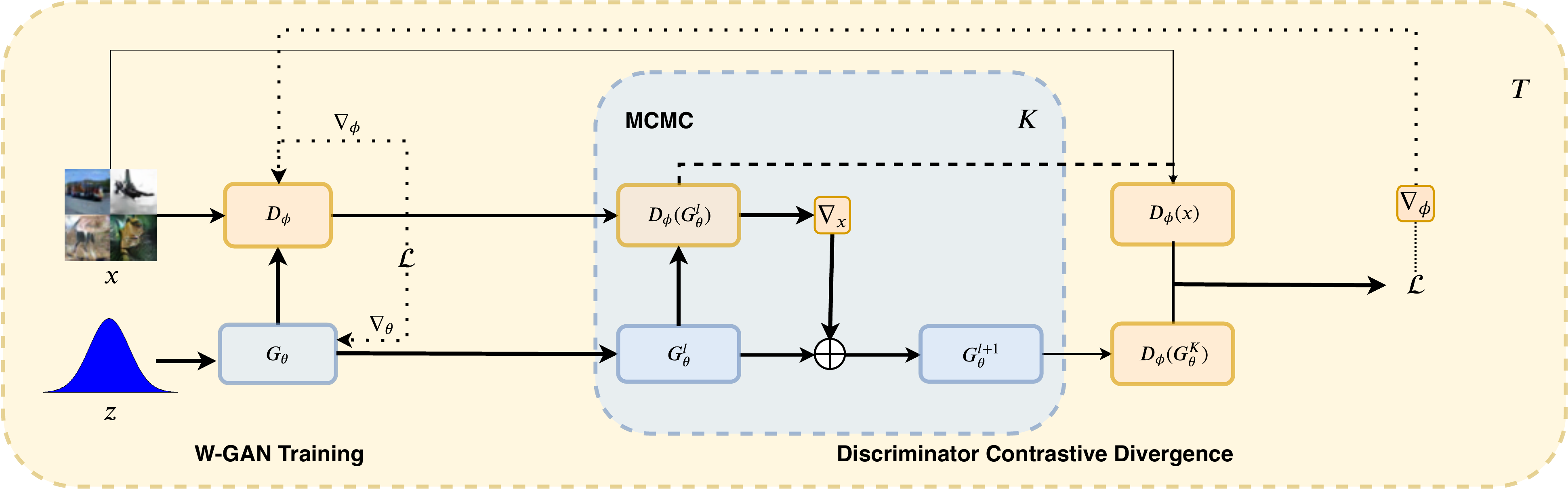} 
    \caption{\label{fig:dcd} Discriminator Contrastive Divergence: 
    After WGAN training, a fine-tuning for critics can be conducted with several MCMC steps, which leverages the gradient of discriminator by Langevin dynamics;
    after the fine-tuning, the discriminator could be viewed as a superior distribution of $p_{G_\theta}$, hence sampling from $p_{G_\theta}$ can be implemented using the same Langevin dynamics as described in \ref{DCD}.
    }
    \vspace{-5pt}
\end{figure*}

\subsubsection{WGAN Discriminator}\label{sec::wgan-critic}
Different from $f$-GANs, the objective of WGANs is derived from the Integral Probability Metric, and the discriminator can not naturally be derived as an estimated density ratio. Before leveraging the remaining information in the discriminator, the property of the discriminator in WGANs needs to be investigated first. We introduce the primal problem implied by WGANs objective as follows:

Let $\pi$ denote the joint probability for transportation between $P$ and $Q$, which satisfies the marginality conditions,
\begin{equation}\label{wgan_p}
       \int d \boldsymbol{y} \pi(\boldsymbol{x}, \boldsymbol{y})=p(\boldsymbol{x}), \quad  \int d \boldsymbol{x} \pi(\boldsymbol{x}, \boldsymbol{y})=q(\boldsymbol{y})
\end{equation}
The primal form first-order Wasserstein distance $W_1$ is defined as:
\begin{align*}
W_{1}\left(\mathcal{P}, \mathcal{Q}\right)=\inf _{\pi \in \Pi\left(\mathcal{P}, \mathcal{Q}\right)} \mathbb{E}_{(x, y) \sim \pi}[\|x- y\|_{2}]
\end{align*}
the objective function of the discriminator in Wasserstein GANs is the Kantorovich-Rubinstein duality of  Eq.~\ref{wgan_p}, and the optimal discriminator has the following property\cite{gulrajani2017improved}:
\begin{theorem}
\label{gradient_direction}
Let $\pi^{*}$ as the optimal transport plan in Eq.~\ref{wgan_p} and $x_{t}=t x+(1-t) y$ with $0 \leq t \leq 1$. With the optimal discriminator $D_\phi$ as a differentiable function and $\pi^{*}(x,x) = 0$ for all $x$, then it holds that:
\begin{align*}
    \mathrm{P}_{(x, y) \sim \pi^{*}}\left[\nabla_{x_{i}} D_\phi^{*}\left(x_{t}\right)=\frac{y-x}{\|y-x\|}\right]=1
\end{align*}
\end{theorem}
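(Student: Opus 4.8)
The plan is to first use strong duality to pin down the \emph{values} of $D_\phi^*$ along the interpolating segment $x_t$, and only afterwards differentiate; this avoids having to reason about gradients of $D_\phi^*$ before we know anything concrete about the function itself.

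I would start from Kantorovich--Rubinstein duality together with the absence of a duality gap for $W_1$, which — under the stated hypotheses guaranteeing the existence of the optimal critic $D_\phi^*$ and the optimal plan $\pi^*$ — makes these two jointly optimal, so that
\[
\mathbb{E}_{x\sim P}\!\left[D_\phi^*(x)\right]-\mathbb{E}_{y\sim Q}\!\left[D_\phi^*(y)\right]=\mathbb{E}_{(x,y)\sim\pi^*}\!\left[\|x-y\|\right].
\]
Since $P$ and $Q$ are precisely the two marginals of $\pi^*$ by \eqref{wgan_p}, the left-hand side equals $\mathbb{E}_{(x,y)\sim\pi^*}\!\left[D_\phi^*(x)-D_\phi^*(y)\right]$, and subtracting gives $\mathbb{E}_{(x,y)\sim\pi^*}\!\left[\|x-y\|-\bigl(D_\phi^*(x)-D_\phi^*(y)\bigr)\right]=0$. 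The $1$-Lipschitz constraint makes this integrand pointwise nonnegative, so a nonnegative random variable with zero mean vanishes almost surely: $D_\phi^*(x)-D_\phi^*(y)=\|x-y\|$ for $\pi^*$-almost every $(x,y)$, and by the hypothesis $\pi^*(x,x)=0$ we may further assume $x\neq y$ on this full-measure set.

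Next, fix such a pair and set $x_t=tx+(1-t)y$, so that $\|x_t-y\|=t\|x-y\|$ and $\|x-x_t\|=(1-t)\|x-y\|$. Applying the $1$-Lipschitz bound separately on the two sub-segments $[y,x_t]$ and $[x_t,x]$ gives $D_\phi^*(x_t)-D_\phi^*(y)\le t\|x-y\|$ and $D_\phi^*(x)-D_\phi^*(x_t)\le(1-t)\|x-y\|$; adding them back recovers the equality $D_\phi^*(x)-D_\phi^*(y)=\|x-y\|$ already established, which forces both sub-inequalities to be equalities. Hence $t\mapsto D_\phi^*(x_t)$ is affine, with slope of magnitude $\|x-y\|$. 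Differentiating this identity via the chain rule, $\langle\nabla D_\phi^*(x_t),\,x-y\rangle=\pm\|x-y\|$; since $1$-Lipschitzness plus differentiability give $\|\nabla D_\phi^*(x_t)\|\le 1$, Cauchy--Schwarz is saturated and $\nabla D_\phi^*(x_t)$ is forced to be the unit vector parallel to $x-y$. Reading off the orientation from the sign convention of \eqref{gan_obj} and the marginals of \eqref{wgan_p} identifies this unit vector as $\tfrac{y-x}{\|y-x\|}$, for every $t\in[0,1]$ (the endpoints by the same one-sided squeeze and continuity); and since the exceptional set has $\pi^*$-measure zero, the claimed identity holds with probability one under $(x,y)\sim\pi^*$.

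The step I expect to be the main obstacle is resisting the temptation to differentiate $D_\phi^*$ prematurely: the crux is the two-sided Lipschitz \emph{squeeze} along the segment, which upgrades the single global scalar equality $D_\phi^*(x)-D_\phi^*(y)=\|x-y\|$ to pointwise affineness of $t\mapsto D_\phi^*(x_t)$, after which differentiability and Cauchy--Schwarz pin down the gradient direction with no slack. A secondary point requiring care is the measure-theoretic passage (nonnegative integrand with zero integral, plus the diagonal having measure zero), which tacitly uses the correct orientation of the marginals; and if strong duality and attainment were not granted a priori, one would additionally need the standard optimal-transport apparatus — compactness of $\mathcal{X}$ and lower semicontinuity of the transport cost — to exclude a duality gap, but here these are subsumed by the assumption that $D_\phi^*$ and $\pi^*$ exist.
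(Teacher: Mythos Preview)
Your proposal is correct and follows essentially the same route as the paper's proof: the two-sided Lipschitz squeeze along the segment (the paper's Lemma~1) to force $t\mapsto D_\phi^*(x_t)$ affine, followed by the directional-derivative computation and the Cauchy--Schwarz saturation argument (the paper's Lemma~2 and final paragraph) to pin down $\nabla D_\phi^*(x_t)$. The one notable difference is that you explicitly supply the complementary-slackness step---deriving $D_\phi^*(x)-D_\phi^*(y)=\|x-y\|$ $\pi^*$-a.e.\ from strong duality and the nonnegative-integrand-with-zero-mean argument---whereas the paper's appendix simply takes this equality as the hypothesis of its lemmas and proves only the deterministic gradient identity; in that sense your write-up is slightly more self-contained.
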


Theorem.~\ref{gradient_direction} states that for each sample $x$ in the generated distribution $p_{G_\theta}$, the gradient on the $x$ directly points to a sample $y$ in the $p_{\text{data}}$, where the $(x,y)$ pairs are consistent with the optimal transport plan $\pi ^{*}$. All the linear interpolations $x_t$ between $x$ and $y$ satisfy that $\nabla_{x_{k}} D_\phi^{*}\left(x_{t}\right)=\frac{y-x}{\|y-x\|}$. 
It should also be noted that similar results can also be drawn in some variants of WGANs, whose loss functions may have a slight difference with standard WGAN~\cite{zhou2019lipschitz}. 
For example, the SNGAN uses the hinge loss during the optimization of the discriminator, \textit{i.e.}, $r(\cdot)$ and $g(\cdot)$ in Eq.~\ref{gan_obj} is selected as $\max (0,-1-u)$ for stabilizing the  training procedure. 
We provide a detailed discussion on several surrogate objectives in Appendix.~\ref{app:obj}.

The above  property of discriminator in WGANs can be interpreted as that given a sample $x$ from generated distribution $p_{G_\theta}$ we can obtain a corresponding $y$ in data distribution $p_{\text{data}}$ by directly conducting gradient decent with the optimal discriminator $D_\phi^{*}$:
\begin{equation}
\label{add grad}
        y = x + w_x*\nabla_{x} D_\phi^{*},\quad w_x \geq 0
\end{equation}
It seems to be a simple and appealing solution to improve $p_{G_\theta}$ with the guidance of discriminator $D_\phi$. However, the following issues exist: 

1) there is no theoretical indication on how to set $w_x$ for each sample $x$ in generated distribution. We noticed that a concurrent work~\cite{tanaka2019discriminator} introduce a search process called Discriminator Optimal Transport(DOT) by finding the corresponding $y^{*}$ through the following:
\begin{equation}
\label{dot}
    y_x =\underset{\boldsymbol{y}}{\arg \min }\left\{\|\boldsymbol{y}-\boldsymbol{x}\|_{2}-D_\phi^{*}(\boldsymbol{y})\right\}
\end{equation}
However, it should be noticed that Eq.~\ref{dot} has a non-unique solution. As indicated by Theorem~\ref{gradient_direction}, all points on the connection between $x$ and $y$ are valid solutions. We further extend the fact into the following theorem:
\begin{theorem}
\label{opt_fail}
    With the $\pi ^{*}$ and $D_\phi^{*}$ as the optimal solutions of the primal problem in Eq.~\ref{wgan_p} and  Kantorovich-Rubinstein duality of Eq.~\ref{wgan_p}, the distribution $p_{ot}$ implied by the generated distribution $p_{G_\theta}$and the discriminator $D_\phi^{*}$ is defined as($y_x$ is defined in Eq.~\ref{dot}):
    \begin{align*}
       p_{ot}(\boldsymbol{y})=\int d \boldsymbol{x} \delta(y-y_x) p_{G_\theta}(\boldsymbol{x})
    \end{align*}
    when $p_{\text{data}} \neq p_{G_\theta}$, there exists infinite numbers of $p_{ot}$ with $p_{\text{data}}$ as a special case. 
\end{theorem}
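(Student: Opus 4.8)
The plan is to use Theorem~\ref{gradient_direction} to show that the functional $g_{\bm x}(\bm y):=\|\bm y-\bm x\|_2-D_\phi^{*}(\bm y)$ minimized in Eq.~\ref{dot} is \emph{constant along every optimal-transport ray} issuing from $\bm x$, so that its minimizer set is a whole segment rather than a point; any measurable selection of $\bm y_{\bm x}$ from these segments is then admissible, and distinct selections produce distinct push-forwards $p_{ot}$.

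First I would derive an integrated form of Theorem~\ref{gradient_direction}. Take $\pi^{*}$ to be an optimal plan induced by a map $\hat{\bm y}$, i.e.\ $\pi^{*}=(\mathrm{id},\hat{\bm y})_{\#}p_{G_\theta}$ with second marginal $p_{\text{data}}$ (such a map exists when $p_{G_\theta}$ is absolutely continuous). For $p_{G_\theta}$-a.e.\ $\bm x$ with $\hat{\bm y}(\bm x)\neq\bm x$, Theorem~\ref{gradient_direction} gives $\nabla D_\phi^{*}(\bm x_t)=\tfrac{\hat{\bm y}(\bm x)-\bm x}{\|\hat{\bm y}(\bm x)-\bm x\|}$ along $\bm x_t=t\bm x+(1-t)\hat{\bm y}(\bm x)$, and integrating in the direction $\hat{\bm y}(\bm x)-\bm x$ yields $D_\phi^{*}(\bm x_t)-D_\phi^{*}(\bm x)=\|\bm x_t-\bm x\|_2$ for all $t\in[0,1]$. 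On the other hand the Kantorovich--Rubinstein constraint makes $D_\phi^{*}$ $1$-Lipschitz, so $D_\phi^{*}(\bm y)\le D_\phi^{*}(\bm x)+\|\bm y-\bm x\|_2$ and hence $g_{\bm x}(\bm y)\ge -D_\phi^{*}(\bm x)$ for every $\bm y$, with equality iff $D_\phi^{*}(\bm y)-D_\phi^{*}(\bm x)=\|\bm y-\bm x\|_2$. Together, these show that the whole segment $[\bm x,\hat{\bm y}(\bm x)]$ attains the global minimum $-D_\phi^{*}(\bm x)$ of $g_{\bm x}$; because $p_{\text{data}}\neq p_{G_\theta}$ forces $\hat{\bm y}(\bm x)\neq\bm x$ on a set of positive $p_{G_\theta}$-measure (using $\pi^{*}(\bm x,\bm x)=0$), this segment is genuinely non-degenerate, so Eq.~\ref{dot} is ill-posed --- indeed even $\bm y_{\bm x}\equiv\bm x$, which returns $p_{ot}=p_{G_\theta}$, is a valid minimizer.

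Next I would exhibit the family. For $c\in[0,1]$ set $T_c(\bm x):=(1-c)\bm x+c\,\hat{\bm y}(\bm x)$; since $T_c(\bm x)\in[\bm x,\hat{\bm y}(\bm x)]$, the previous paragraph makes $T_c$ an admissible selection in Eq.~\ref{dot}, so $p_{ot}^{(c)}:=(T_c)_{\#}p_{G_\theta}$ is a legitimate $p_{ot}$. Here $p_{ot}^{(0)}=p_{G_\theta}$ and $p_{ot}^{(1)}=\hat{\bm y}_{\#}p_{G_\theta}=p_{\text{data}}$, which is exactly the ``special case'' asserted; and $c\mapsto p_{ot}^{(c)}$ is a constant-speed $W_1$-geodesic between these endpoints, so $W_1(p_{ot}^{(c)},p_{ot}^{(c')})=|c-c'|\,W_1(p_{G_\theta},p_{\text{data}})$. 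Since $p_{\text{data}}\neq p_{G_\theta}$ this is positive for $c\neq c'$, so the $p_{ot}^{(c)}$ are pairwise distinct: there are infinitely many valid $p_{ot}$. (If one prefers to avoid the geodesic identity, the same follows because the mean of $p_{ot}^{(c)}$ is $(1-c)\,\E_{p_{G_\theta}}[\bm X]+c\,\E_{p_{\text{data}}}[\bm X]$, injective in $c$ when the two means differ, with the degenerate case handled by a higher moment.)

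I expect the main obstacle to be measure-theoretic rather than conceptual: for the $W_1$ cost the optimal plan is generally non-unique and optimal maps need not exist without regularity, so the step ``take $\pi^{*}$ induced by a map $\hat{\bm y}$ with second marginal $p_{\text{data}}$'' must be justified --- via absolute continuity of $p_{G_\theta}$ together with a selection theorem --- and one must ensure that the $D_\phi^{*}$ of Theorem~\ref{gradient_direction} is the Kantorovich potential associated with \emph{this} plan, so that the unit-gradient property genuinely holds along the segments $[\bm x,\hat{\bm y}(\bm x)]$. I would therefore state the absolute-continuity hypothesis explicitly; where $p_{G_\theta}$ fails it, one can still retreat to an arbitrary measurable selection from the disintegration of $\pi^{*}$ to obtain the ``infinitely many'' conclusion, at the cost of a more delicate argument for realizing $p_{\text{data}}$ as one member.
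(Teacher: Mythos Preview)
Your proposal is correct and rests on the same key observation as the paper's proof: by Theorem~\ref{gradient_direction}, the objective $g_{\bm x}(\bm y)=\|\bm y-\bm x\|_2-D_\phi^{*}(\bm y)$ is flat along the optimal-transport segment $[\bm x,\hat{\bm y}(\bm x)]$, so Eq.~\ref{dot} has a continuum of solutions.

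The paper's argument is in fact briefer and weaker than yours. It simply computes, for $x_t$ on the segment,
\[
\nabla_{x_t}\bigl\{\|x_t-\bm x\|_2 - D_\phi^{*}(x_t)\bigr\}
= \frac{\hat{\bm y}-\bm x}{\|\hat{\bm y}-\bm x\|_2}-\frac{\hat{\bm y}-\bm x}{\|\hat{\bm y}-\bm x\|_2}=0,
\]
and stops there, concluding that every point on the segment is a stationary point of the gradient search. It does not invoke the $1$-Lipschitz bound to upgrade ``stationary'' to ``global minimizer'', does not explicitly construct the family $\{p_{ot}^{(c)}\}_{c\in[0,1]}$, does not verify that $p_{\text{data}}$ is the $c{=}1$ member, and does not argue pairwise distinctness. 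Your integrated form $D_\phi^{*}(x_t)-D_\phi^{*}(\bm x)=\|x_t-\bm x\|_2$ together with the Lipschitz lower bound $g_{\bm x}\ge -D_\phi^{*}(\bm x)$, the displacement-interpolation construction $T_c$, and the $W_1$-geodesic distinctness argument fill exactly those gaps. The measure-theoretic caveats you raise (existence of an optimal \emph{map} for the $\ell_2$ cost under $W_1$) are not addressed in the paper either; your plan to state absolute continuity of $p_{G_\theta}$ as a standing hypothesis is the right fix.
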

Theorem~\ref{opt_fail} provides a theoretical justification for the poor empirical performance of conducting DOT in the sample space, as shown in their paper.

2) Another problem lies in that samples distributed outside the generated distribution ($p_{G_\theta}$) are never explored during training, which results in much adversarial noise during the gradient-based search process, especially when the sample space is high dimensional such as real-world images. 

To fix the issues mentioned above in leveraging the information of discriminator in Wasserstein GANs, we propose viewing the discriminator as an energy function. With the discriminator as an energy function, the stationary distribution is unique, and Langevin dynamics can approximately conduct sampling from the stationary distribution. Due to the monotonic property of MCMC, there will not be issues like setting $w_x$ in Eq.~\ref{add grad}. Besides, the second issue can also be easily solved by fine-tuning the energy spaces with contrastive divergence.  In addition to the benefits illustrated above, if the discriminator is an energy function, the samples from the corresponding energy-based model can be obtained through Langevin dynamics by using the gradients of the discriminator which takes advantage of the  property of discriminator as shown in Theorem~\ref{gradient_direction}. With all the facts as mentioned above, there is strong motivation to explore further and bridge the gap between discriminator in WGAN and the energy-based model. 

\subsection{Semi-Amortized Generation with Langevin Dynamics}
\label{sec::energyofcritic}
We first introduce the Fenchel dual of the intractable partition function $Z_\theta$ in Eq.~\ref{partition_function}:
\begin{theorem}\cite{wainwright2008graphical}
    With $H(q) = - \int q(x)\log q(x) dx$, the Fenchel dual of log-partition $Z_\theta$ is as follows:
    \begin{align}
        A(E_\theta)=\max _{q \in \mathcal{P}}\langle q(x), E_\theta(x)\rangle+ H(q),
    \end{align}
    where $\mathcal{P}$ denotes the space of distributions, and $\langle q(x), E_\theta(x)\rangle =\int  E_\theta(x)q(x) dx $.
\end{theorem}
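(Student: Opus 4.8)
The plan is to establish the Fenchel dual of the log-partition function $\log Z_\theta$ by recognizing it as a standard result from convex analysis, specifically the variational (Gibbs) characterization of the log-partition function. First I would write $\log Z_\theta = \log \int e^{-E_\theta(x)}\,\mathrm{d}x$ and consider the functional $F(q) = \langle q(x), E_\theta(x)\rangle + H(q) = \int E_\theta(x) q(x)\,\mathrm{d}x - \int q(x)\log q(x)\,\mathrm{d}x$ over the space $\mathcal{P}$ of probability densities. The claim is that $A(E_\theta) = \sup_{q\in\mathcal{P}} F(q) = \log Z_\theta$ and hence $A$ is precisely the convex-conjugate-type object referenced in the statement.

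The key computational step is to introduce the candidate maximizer $q^*(x) = e^{-E_\theta(x)}/Z_\theta$ and compute $F(q^*)$ directly: since $\log q^*(x) = -E_\theta(x) - \log Z_\theta$, we get $-\int q^*(x)\log q^*(x)\,\mathrm{d}x = \int q^*(x) E_\theta(x)\,\mathrm{d}x + \log Z_\theta$, so $F(q^*) = 2\int q^* E_\theta\,\mathrm{d}x + \log Z_\theta$ — wait, one must be careful with sign conventions here; in the paper's convention $q_\theta(x) = e^{-E_\theta(x)}/Z_\theta$, so the inner product term and the sign in front of $E_\theta$ must be matched so that $F(q^*)$ collapses to exactly $\log Z_\theta$. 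I would then verify optimality either via the Gibbs inequality argument or a Lagrange-multiplier / calculus-of-variations argument: for any other density $q$, writing $F(q^*) - F(q) = \langle q^* - q, E_\theta\rangle + H(q^*) - H(q)$ and substituting $E_\theta = -\log q^* - \log Z_\theta$ reduces the difference to $\KL(q \| q^*) \geq 0$, with equality iff $q = q^*$. This simultaneously proves the supremum is attained uniquely at $q^*$ and equals $\log Z_\theta$.

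Concretely I would proceed in three steps: (1) restrict attention to densities $q$ with finite differential entropy and finite expectation of $E_\theta$, so $F(q)$ is well-defined on the relevant effective domain; (2) exhibit $q^*$ and show $F(q) \le F(q^*)$ for all admissible $q$ via the Gibbs/KL inequality, establishing $\sup_q F(q) = F(q^*) = \log Z_\theta$; (3) remark that this is exactly the Fenchel conjugate relationship, i.e.\ $\log Z_\theta$ and $-H$ form a dual pair under the pairing $\langle q, E_\theta \rangle$, which is the content of the cited result from \cite{wainwright2008graphical}. Regularity caveats about $Z_\theta < \infty$ and measurability of $E_\theta$ I would relegate to a footnote or an assumption inherited from the EBM setup in Section~\ref{A}.

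The main obstacle I expect is not the algebra — the KL-inequality collapse is routine — but rather the functional-analytic bookkeeping: ensuring the supremum over the infinite-dimensional simplex $\mathcal{P}$ is genuinely attained (not merely approached), handling densities for which $\langle q, E_\theta\rangle = +\infty$ or $H(q) = -\infty$, and confirming that the effective domain of the conjugate is nonempty exactly when $Z_\theta < \infty$. In a NeurIPS-style paper this is typically handled by either citing \cite{wainwright2008graphical} wholesale or by silently assuming sufficient regularity; I would follow the latter and state the needed integrability hypothesis explicitly, then present the clean three-line KL computation as the proof.
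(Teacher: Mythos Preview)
The paper does not supply a proof of this theorem: it is stated with the citation \cite{wainwright2008graphical} attached to the theorem environment and then immediately used to derive Eq.~\ref{fenchel_mle}, with no argument given in the main text or appendix. So there is no ``paper's own proof'' to compare against; the authors treat this as a known result from the exponential-family/variational-inference literature.

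Your proposed argument --- exhibit the Gibbs density $q^* \propto e^{-E_\theta}$, then show $F(q^*) - F(q) = \KL(q\|q^*) \geq 0$ for every admissible $q$ --- is exactly the standard proof one finds in Wainwright--Jordan, so in spirit you are reproducing the cited reference rather than diverging from it. Your three-step plan (restrict to the effective domain, apply the Gibbs/KL inequality, identify the Fenchel-conjugate structure) is correct and complete for a paper at this level.

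One genuine issue you flagged but did not resolve: with the paper's convention $q_\theta(x) = e^{-E_\theta(x)}/Z_\theta$, the functional that is maximized at $q^*$ and equals $\log Z_\theta$ is $\langle q, -E_\theta\rangle + H(q)$, not $\langle q, E_\theta\rangle + H(q)$ as written in the statement. Your aside (``one must be careful with sign conventions here'') is right to pause; if you carry your own computation through you will find $F(q^*) = 2\langle q^*, E_\theta\rangle + \log Z_\theta$ under the paper's signs, which does not collapse. The downstream formula Eq.~\ref{fenchel_mle} is consistent with the \emph{corrected} sign (it has $-\mathbb{E}_q[E_\theta]$), so the discrepancy is confined to the theorem statement itself. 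In your write-up you should either (i) silently flip the sign in the pairing, or (ii) note that the result holds as stated provided one reads $E_\theta$ as the negative energy (i.e.\ the natural parameter in exponential-family notation, which is how Wainwright--Jordan actually phrase it).
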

We put the Fenchel dual of $A(E_\theta)$ back into the MLE objective in Eq.~\ref{mle}, we achieve the following min-max game formalization for training energy-based model based on MLE:
\begin{equation}
    \label{fenchel_mle}
     \min _{q \in \mathcal{P}} \max _{E_\theta \in \mathcal{E}} \underbrace{\mathbb{E}_{\boldsymbol{x} \sim P_{\text {data }}}\left[E_\theta(\boldsymbol{x})\right]-\mathbb{E}_{\boldsymbol{x} \sim q}\left[E_\theta(\boldsymbol{x})\right]}_\text{WGAN's objective for critic} -\underbrace{H(q)}_{\text{entropy regularization}}.
\end{equation}

\begin{algorithm}[t]
  \caption{Discriminator Contrastive Divergence}
  \label{DCD}
  \begin{algorithmic}[1]
  \STATE {\bfseries Input:} Pretrained generator $G_\theta$, discriminator $D_\phi$.
  \STATE Set the step size $\epsilon$, the length of MCMC steps $K$ and the total iterations $T$.
  \FOR{iteration $i = 1, \cdots, T$}
  \STATE Sample a batch of data samples $\{x_t\}_{t=1}^{m}$ for empirical data distribution $p_{\text{data}}$ and $\{z_t\}_{t=1}^{m}$ for the prior distribution $p(z)$.\\
   \FOR{iteration $l = 1, \cdots, K$}
     \STATE \textbf{Pixel Space:} $G_\theta(z_t)^l = G_\theta(z_t)^{l-1} -\frac{\epsilon}{2} \nabla_{x} D_{\phi}\left(G_\theta(z_t)^{l-1}\right)+\sqrt{\epsilon}\omega, \omega \sim \mathcal{N}(0, \mathcal{I})$ \OR
     \STATE \textbf{Latent Space:} $z_t^l = z_t^{l-1} -\frac{\epsilon}{2} \nabla_{z} D_{\phi}\left(G_\theta(z_t)^{l-1}\right)+\sqrt{\epsilon}\omega, \omega \sim \mathcal{N}(0, \mathcal{I})$
   \ENDFOR
   \STATE Optimized the following objective w.r.t. $\phi$:
   \STATE \textbf{Pixel Space:} $L = \frac{1}{m}\sum_{t}(D_\phi(x_t)-D_\phi(G_\theta(z_t)^K))$ \OR
   \STATE \textbf{Latent Space}: $L = \frac{1}{m}\sum_{t}(D_\phi(x_t)-D_\phi(G_\theta(z_t^K)))$
  \ENDFOR
\end{algorithmic}
\end{algorithm}

The Fenchel dual view of MLE training in the energy-based model explicitly illustrates the gap and connection between the WGAN and Energy based model. If we consider the dual distribution $q$ as the generated distribution $p_{G_\theta}$, and the $D_\phi$ as the energy function $E_\theta$. The duality form for training energy-based models is essentially the WGAN's objective with the entropy of the generator is regularized.

Hence to turn the discriminator in WGAN into an energy function, we may conduct several fine-tuning steps, as illustrated in Eq.~\ref{fenchel_mle}. Note that maximizing the entropy of the $p_{G_\theta}$ is indeed a challenging task, which needs to either use a tractable density generator, \emph{e.g.}, normalizing Flows~\cite{realnvp}, or maximize the mutual information between the latent variable $\mathcal{Z}$ and the corresponding $G_\theta (Z)$ when the $G_\theta$ is a deterministic mapping. However, instead of maximizing the entropy of the generated distribution $p_{G_\theta}$ directly, we derive our method based on the following fact: 
\begin{proposition}\cite{kim2016deep}\label{prop1}
    Update the generated distribution $p_{G_\theta}$ according to the gradient estimated through Equation.~\ref{fenchel_mle}, essentially minimized the Kullback–Leibler (KL) divergence between $p_{G_\theta}$ and the distribution $p_{D_\phi}$, which refers to the distribution implied by using $D_\phi$ as the energy function, as illustrated in Eq.~\ref{partition_function}, \emph{i.e.} $\KL(p_{G_\theta}||p_{D_\phi})$.
\end{proposition}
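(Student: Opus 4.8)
The plan is to hold the critic $D_\phi$ fixed (it is exactly the object playing the role of the energy function $E_\theta$), isolate the portion of the objective in Eq.~\ref{fenchel_mle} that depends on the generator distribution $q=p_{G_\theta}$, and show that its gradient with respect to $\theta$ coincides with $\nabla_\theta\KL(p_{G_\theta}\,\|\,p_{D_\phi})$ up to a $\theta$-independent additive constant. Since a single update of the generator uses the gradient of Eq.~\ref{fenchel_mle} evaluated at the current critic, this is precisely what ``essentially minimizes $\KL(p_{G_\theta}\,\|\,p_{D_\phi})$'' should mean.

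First I would discard the term $\mathbb{E}_{\boldsymbol{x}\sim P_{\text{data}}}[D_\phi(\boldsymbol{x})]$ in Eq.~\ref{fenchel_mle}: it does not involve $q$ and so contributes nothing to the generator update. What remains is $J(q)\defeq -\mathbb{E}_{\boldsymbol{x}\sim q}[D_\phi(\boldsymbol{x})]-H(q)$, which, writing out $H(q)=-\int q(x)\log q(x)\,\mathrm{d}x$, equals $\int q(x)\big(\log q(x)-D_\phi(x)\big)\,\mathrm{d}x$. Next I would introduce the normalized Gibbs distribution attached to $D_\phi$ in the sense of Eq.~\ref{partition_function}, namely $p_{D_\phi}(x)=e^{D_\phi(x)}/Z_\phi$ with $Z_\phi=\int e^{D_\phi(x')}\,\mathrm{d}x'$ (equivalently, the energy-based model whose energy is $E=-D_\phi$; this is the sign convention under which the first two terms of Eq.~\ref{fenchel_mle} reproduce WGAN's critic objective). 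Substituting $D_\phi(x)=\log p_{D_\phi}(x)+\log Z_\phi$ into $J(q)$ turns it into $\int q(x)\log\tfrac{q(x)}{p_{D_\phi}(x)}\,\mathrm{d}x-\log Z_\phi=\KL(q\,\|\,p_{D_\phi})-\log Z_\phi$. Because $Z_\phi$ depends only on $\phi$, taking $\nabla_\theta$ with $q=p_{G_\theta}$ gives $\nabla_\theta J(p_{G_\theta})=\nabla_\theta\KL(p_{G_\theta}\,\|\,p_{D_\phi})$, so each gradient step on the generator derived from Eq.~\ref{fenchel_mle} is a descent step on that KL; when the update uses the value attained by the inner $\max_{E_\theta}$ rather than a frozen critic, an envelope-theorem argument lets me treat $D_\phi$ as fixed and the same identity survives.

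The main thing to be careful about — rather than a genuine obstacle — is the sign bookkeeping: ``$D_\phi$ as an energy function as in Eq.~\ref{partition_function}'' must be read so that $-\mathbb{E}_q[D_\phi]$ lines up with $-\mathbb{E}_q[\log p_{D_\phi}]$ modulo $\log Z_\phi$, otherwise the KL comes out with its arguments swapped or with the wrong sign. I would also state explicitly that the intractability of $Z_\phi$ is irrelevant precisely because it is constant in $\theta$ — the same reason the partition term cancels in the contrastive-divergence gradient of Eq.~\ref{cd} — and that the conclusion concerns the first-order update direction, not reaching the minimizer in one step. Beyond that, the argument is a one-line rearrangement of $J(q)$, so there is no hard technical core.
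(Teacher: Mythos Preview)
Your argument is correct and is the natural direct proof: once you isolate the $q$-dependent part of Eq.~\ref{fenchel_mle}, namely $-\mathbb{E}_{q}[D_\phi]-H(q)$, the substitution $D_\phi(x)=\log p_{D_\phi}(x)+\log Z_\phi$ rewrites it as $\KL(q\,\|\,p_{D_\phi})-\log Z_\phi$, and since $Z_\phi$ is $\theta$-independent the generator gradient is exactly $\nabla_\theta\KL(p_{G_\theta}\,\|\,p_{D_\phi})$. Your caveat about sign bookkeeping is also on point: the paper identifies $D_\phi$ with $E_\theta$ in Eq.~\ref{fenchel_mle} while Eq.~\ref{partition_function} uses $p\propto e^{-E}$, so the identity only comes out as stated under the reading $p_{D_\phi}\propto e^{D_\phi}$ (energy $=-D_\phi$), which is the convention that makes the first two terms match the WGAN critic.

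The paper does \emph{not} argue this way. Its proof of Proposition~\ref{prop1} consists of the sentence ``Proposition~\ref{prop1} is the direct result of the following Lemma'' and then states and proves the data-processing inequality for KL under a common Markov kernel (for any two initial laws $q,r$ evolved by $\mathcal{K}$, $\KL(q_t\|r_t)$ is non-increasing in $t$). That lemma is precisely the monotonicity of Eq.~\ref{MCMC_monotic}; it is what justifies \emph{replacing} the gradient step on $q$ by MCMC steps in DCD, but it does not by itself show that the gradient of Eq.~\ref{fenchel_mle} equals $\nabla_\theta\KL(p_{G_\theta}\|p_{D_\phi})$. In short, your algebraic rearrangement proves the proposition as literally stated, whereas the paper's appendix proves the neighboring fact the algorithm actually relies on (Langevin steps targeting $p_{D_\phi}$ decrease this same KL). Both are elementary; yours matches the proposition, theirs matches the method.
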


To avoid the computation of $H(p_{G_\theta})$, motivated by the monotonic property of MCMC, as illustrated in Eq.~\ref{MCMC_monotic}, we propose Discriminator Contrastive Divergence (DCD), which replaces the gradient-based optimization on $q$($p_{G_\theta}$) in Eq.~\ref{fenchel_mle} with several steps of MCMC for finetuning the critic in WGAN into an energy function. To be more specific, we use Langevin dynamics\cite{teh2003energy} which leverages the gradient of the discriminator to conduct sampling:
\begin{equation}
\label{langevin}
x_{k}=x_{k-1}-\frac{\epsilon}{2} \nabla_{x} D_{\phi}\left(x_{k-1}\right)+\sqrt{\epsilon}\omega, \omega \sim \mathcal{N}(0, \mathcal{I}),
\end{equation}
Where $\epsilon$ refers to the step size. The whole finetuning procedure is illustrated in Algorithm~\ref{DCD}. 
The GAN-based approaches are implicitly constrained by the  dimension of the latent noise, which is based on a widely applied assumption that the high dimensional data, \emph{e.g.}, images, actually distribute on a relatively low-dimensional manifold. Apart from searching the reasonable point in the data space, we could also find the lower energy part of the latent manifold by conducting Langevin dynamics in the latent space which are more stable in practice, \emph{i.e.}:
\begin{equation}
\label{langevin_Z}
z_t^l = z_t^{l-1} -\frac{\epsilon}{2} \nabla_{z} D_{\phi}\left(G_\theta(z_t)^{l-1}\right)+\sqrt{\epsilon}\omega, \omega \sim \mathcal{N}(0, \mathcal{I}).
\end{equation}

Ideally, the proposal should be accepted or rejected according to the Metropolis–Hastings algorithm:
\begin{equation}
\label{mhrej}
\alpha:=\min \left\{1, \frac{D_\phi \left(x_{k}\right) q\left(x_{k-1} | x_{k}\right)}{D_\phi \left(x_{k-1}\right) q\left(x_{k} | x_{k-1}\right)}\right\},
\end{equation}
where $q$ refers to the proposal which is defined as:
\begin{equation}
q\left(x^{\prime} | x\right) \propto \exp \left(-\frac{1}{4 \tau}\left\|x^{\prime}-x-\tau \nabla \log \pi(x)\right\|_{2}^{2}\right).
\end{equation}
In practice, we find the rejection steps described in Eq.~\ref{mhrej} do not boost performance. For simplicity, following \cite{song2019generative,du2019implicit}, we apply Eq.~\ref{langevin} in experiments as an approximate version. 

After fine-tuning, the discriminator function can be approximated seen as an unnormalized probability function, which implies a unique distribution $p_{D_\phi}$. And similar to the $p_{*}$ implied in the rejection sampling-based method, it is reasonable to assume that $p_{D_\phi}$ is a superior distribution of $p_{G_\theta}$. Sampling from $p_{D_\phi}$ can be implemented through the Langevin dynamics, as illustrated in Eq.~\ref{langevin} with $p_{G_\theta}$ serves as the initial distribution.

\section{Experiments}

\begin{figure*}[!t]
	\centering
	\begin{subfigure}{0.32\linewidth}
    	\includegraphics[width=1.0\columnwidth]{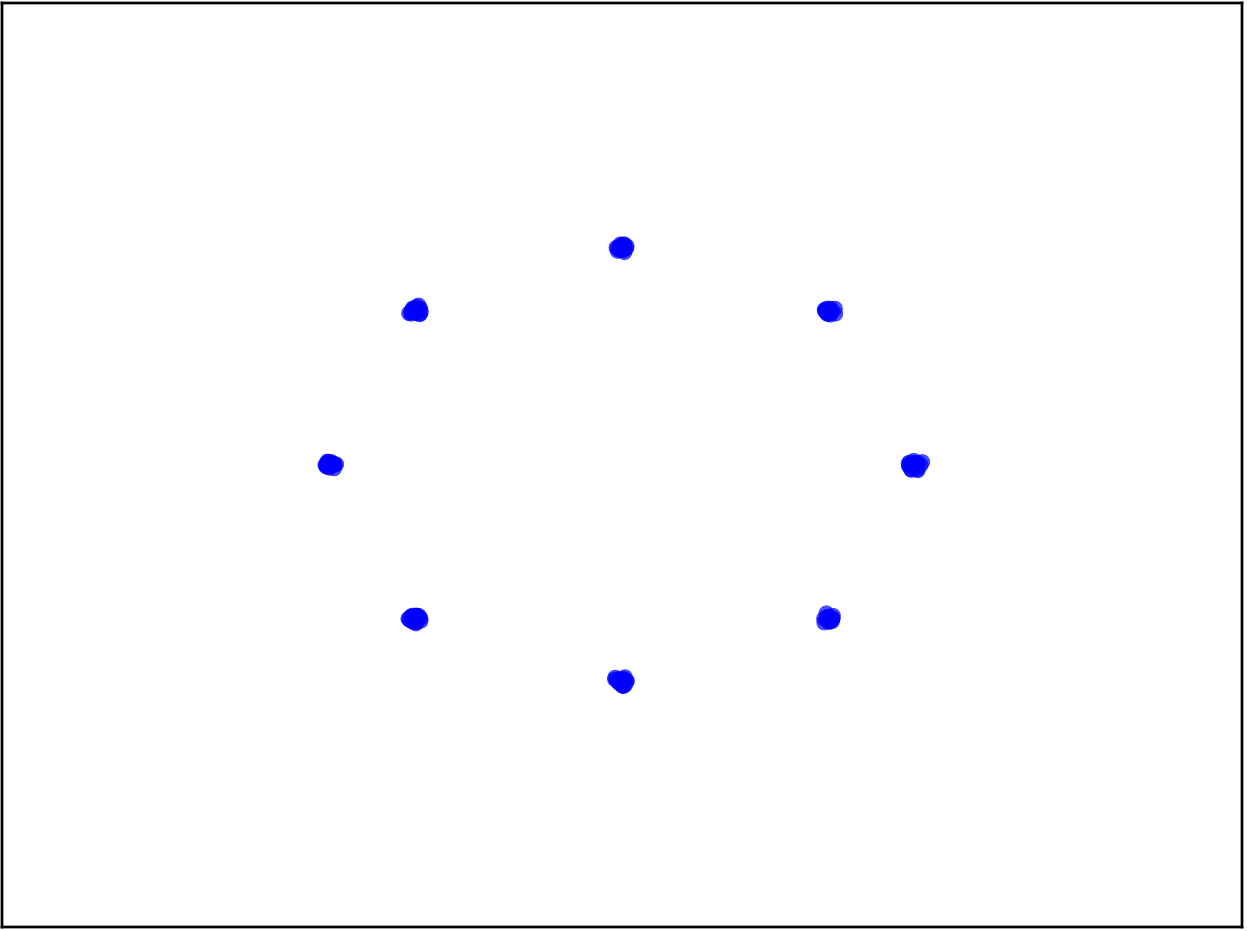} 
    \end{subfigure}
    \begin{subfigure}{0.32\linewidth}
		\includegraphics[width=1.0\columnwidth]{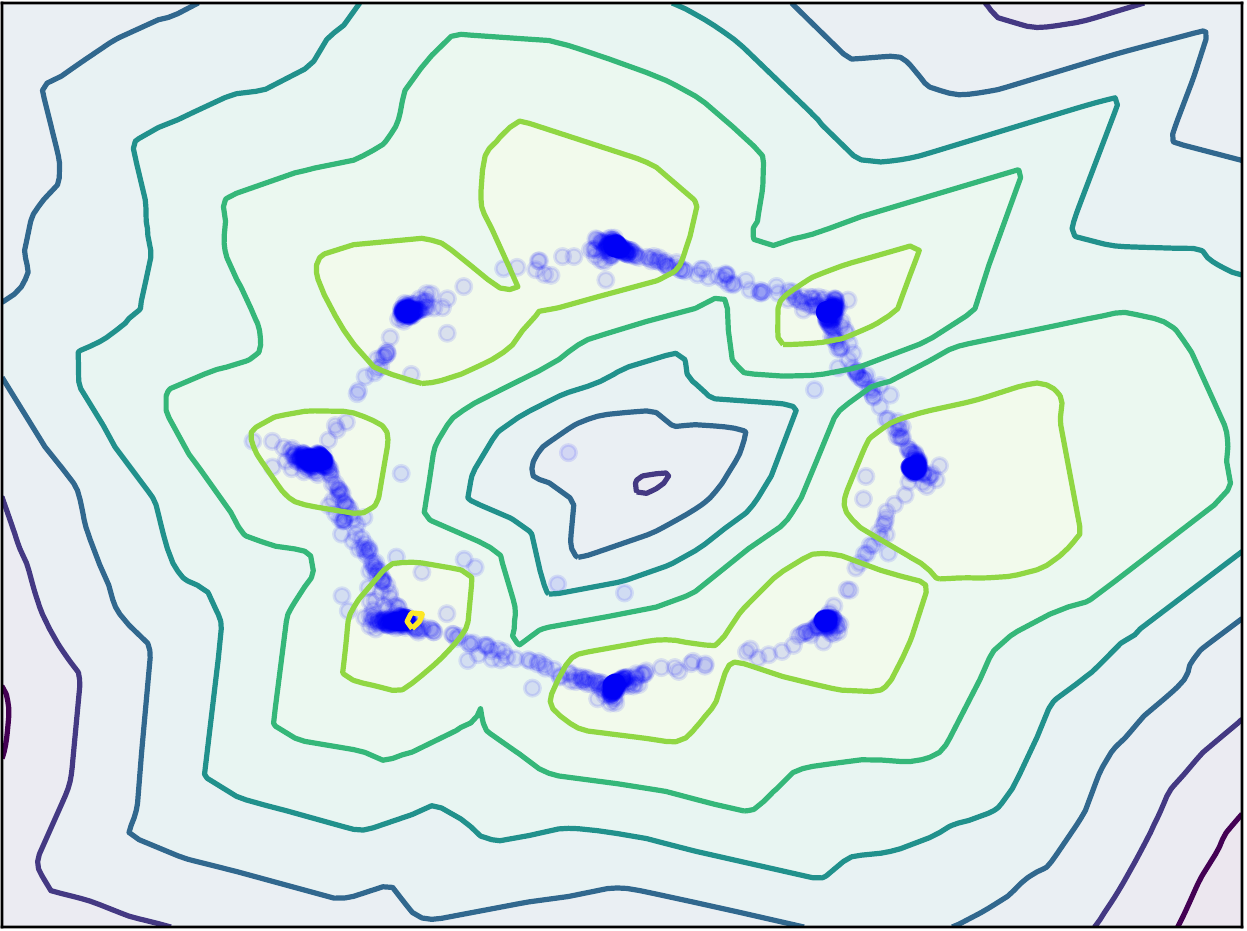}
    \end{subfigure}
    \begin{subfigure}{0.32\linewidth}
		\includegraphics[width=1.0\columnwidth]{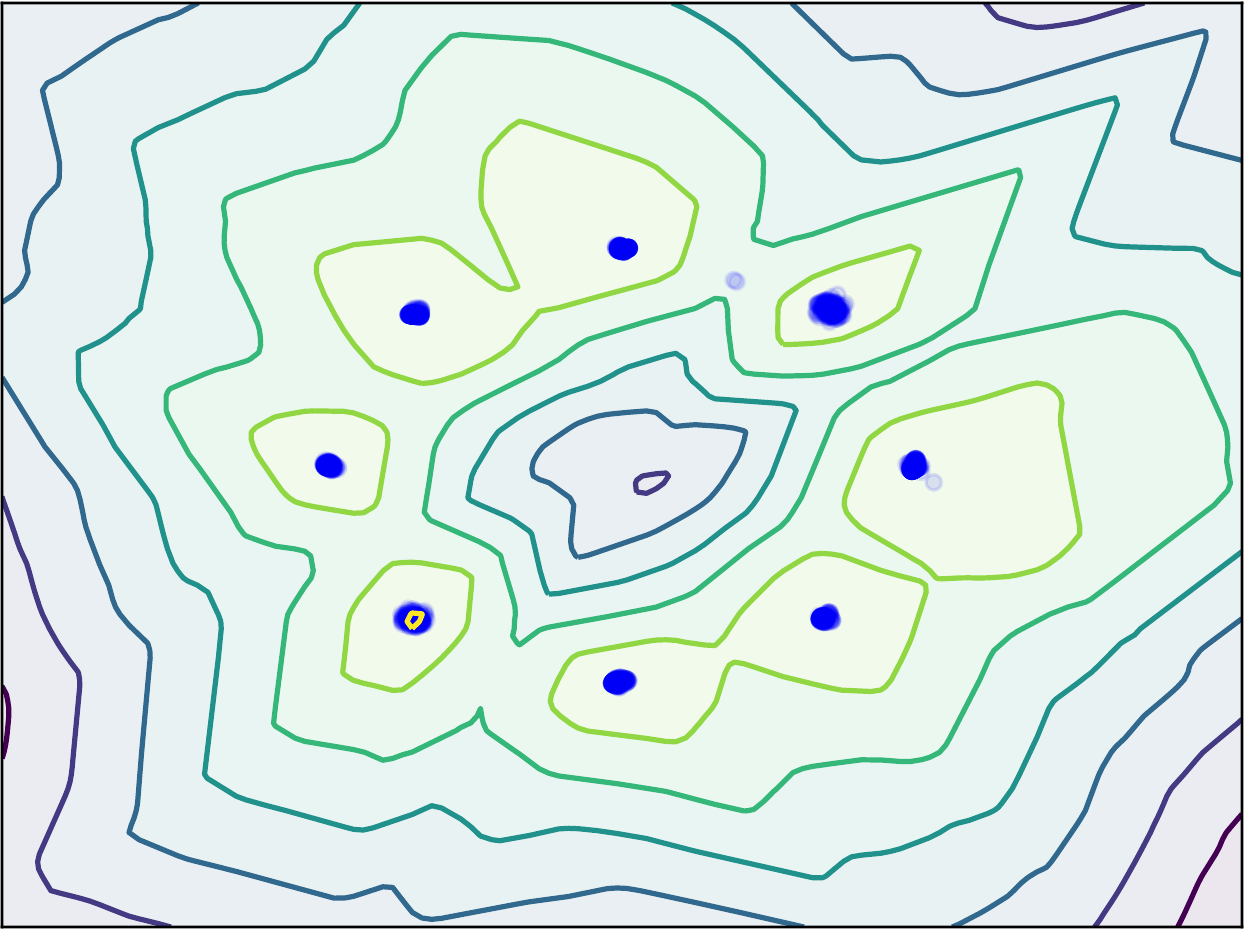}
    \end{subfigure}
    \begin{subfigure}{0.32\linewidth}
		\includegraphics[width=1.0\columnwidth]{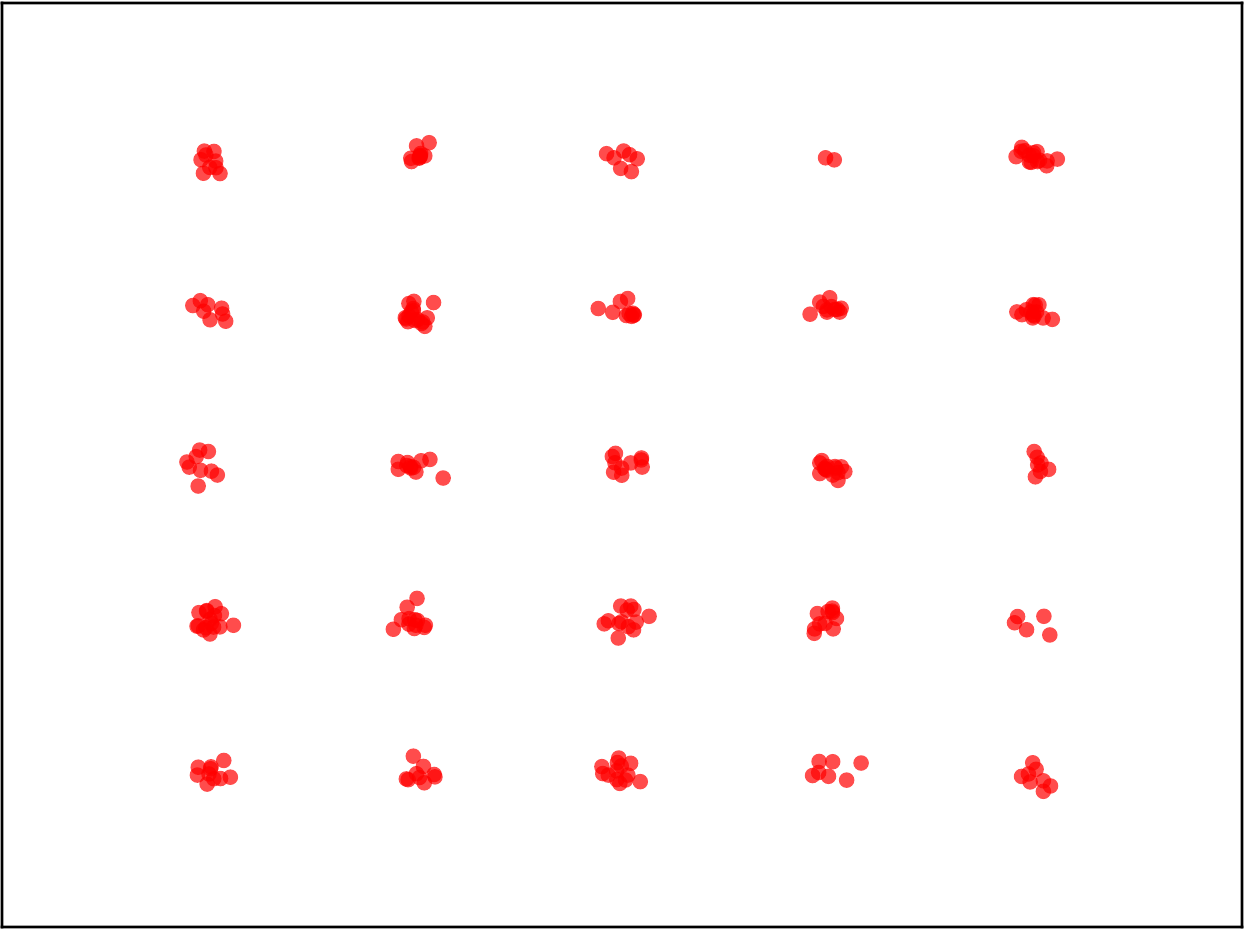} 
        \caption{Target distribution}
        \label{subfig:toy:true}
    \end{subfigure}
    \begin{subfigure}{0.32\linewidth}
		\includegraphics[width=1.0\columnwidth]{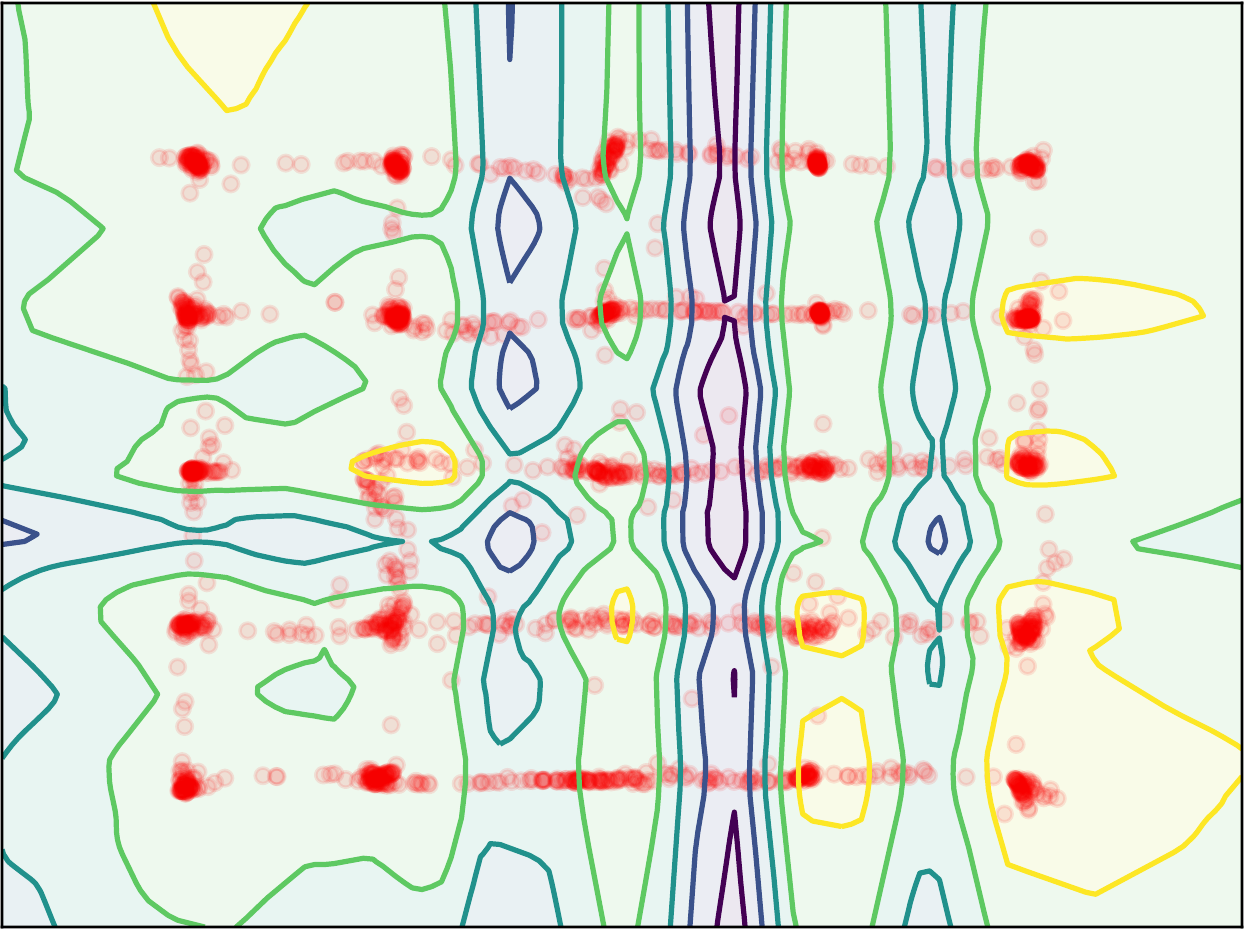}
        \caption{SNGAN}
        \label{subfig:toy:sngan}
    \end{subfigure}
    \begin{subfigure}{0.32\linewidth}
		\includegraphics[width=1.0\columnwidth]{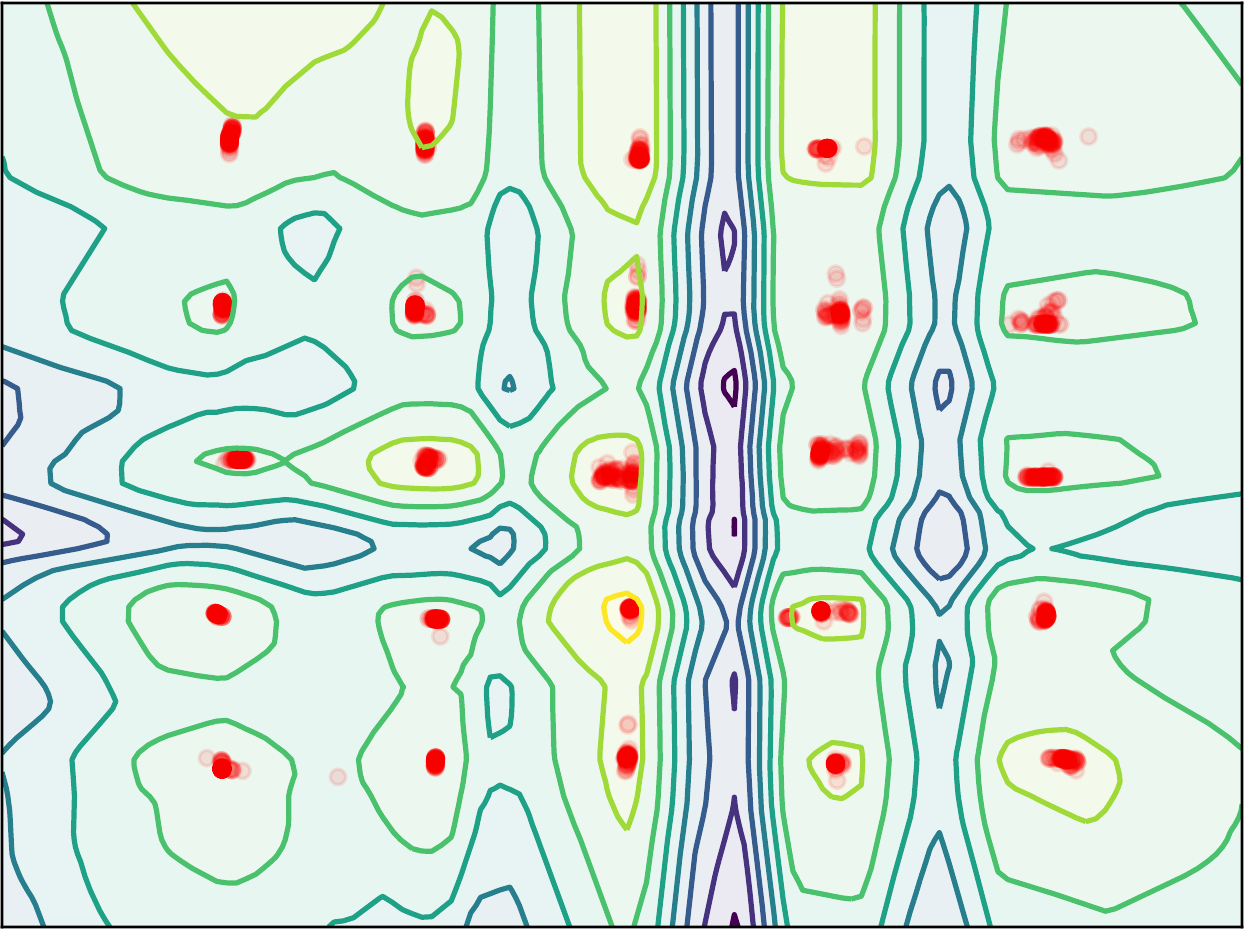}
        \caption{SNGAN-DCD}
        \label{subfig:toy:sngan-dcd}
    \end{subfigure}
    \caption{Density modeling on synthetic distributions. \textbf{Top}: 8 Gaussian distribution. \textbf{Bottom}: 25 Gaussian distribution. \textbf{Left}: Distribution of real data. \textbf{Middle}: Distribution defined by the generator of SNGAN. The surface is the level set of the critic. Yellow corresponds to higher value while purple corresponds to lower. \textbf{Right:} Distribution defined by the SNGAN-DCD. The surface is the level set of the proposed energy function.}
    \vspace{-5pt}
    \label{fig:toy}
\end{figure*}

In this section, we conduct extensive experiments on both synthetic data and real-world images to demonstrate the effectiveness of our proposed method. The results show that taking the optionally fine-tuned Discriminator as the energy function and sampling from the corresponding $p_{D_\phi}$ yield stable improvement over the WGAN implementations.

\subsection{Synthetic Density Modeling}
Displaying the level sets is a meaningful way to study learned critic. Following the \cite{azadi2018discriminator,gulrajani2017improved}, we investigate the impacts of our method on two challenging low-dimensional synthetic settings: twenty-five isotropic Gaussian distributions arranged in a grid and eight Gaussian distributions arranged in a ring (Fig.~\ref{subfig:toy:true}). For all different settings, both the generator and the discriminator of the WGAN model are implemented as neural networks with four fully connected layers and Relu activations. The Lipschitz constraint is restricted through spectral normalization~\cite{miyato2018spectral}, while the prior is a two-dimensional multivariate Gaussian with a mean of $0$ and a standard deviation of $1$.

To investigate whether the proposed Discriminator Contrastive Divergence is capable of tuning the distribution induced by the discriminator as desired energy function, \emph{i.e.} $p_{D_\phi}$, we visualize both the value surface of the critic and the samples obtained from $p_{D_\phi}$ with Langevin dynamics. The results are shown in Figure.~\ref{fig:toy}. As can be observed, the original WGAN (Fig.~\ref{subfig:toy:sngan}) is strong enough to cover most modes, but there are still some spurious links between two different modes. The enhanced distribution $p_{D_\phi}$ (Fig.~\ref{subfig:toy:sngan-dcd}), however, has the ability to reduce spurious links and recovers the modes with underestimated density. More precisely, after the MCMC fine-tuning procedure (Fig.~\ref{subfig:toy:sngan-dcd}), the gradients of the value surface become more meaningful so that all the regions with high density in data distribution $p_{\text{data}}$ are assigned with high $D_\phi$ value, \emph{i.e.}, lower energy($\exp(-D_\phi)$). By contrast, in the original discriminator (Fig.~\ref{subfig:toy:sngan}), the lower energy regions in $p_{D_\phi}$ are not necessarily consistent with the high-density region of $p_{\text{data}}$.

\subsection{Real-World Image Generation}

To quantitatively and empirically study the proposed DCD approach, in this section, we conduct experiments on unsupervised real-world image generation with DCD and its related counterparts. On several commonly used image datasets, experiments demonstrate that our proposed DCD algorithm can always achieve better performance on different benchmarks with a significant margin.

\subsubsection{Experimental setup}

\textbf{Baselines.} 
We evaluated the following models as our baselines: we take PixelCNN~\cite{van2016conditional}, PixelIQN~\cite{ostrovski2018autoregressive}, and MoLM~\cite{ravuri2018learning} as representatives of other types of generative models. For the energy-based model, we compared the proposed method with EBM~\cite{du2019implicit} and NCSN~\cite{song2019generative}. For GAN models, we take WGAN-GP~\cite{gulrajani2017improved}, Spectral Normalization GAN (SNGAN)~\cite{miyato2018spectral}, and Progressiv eGAN~\cite{karras2017progressive} for comparison. We also take the aforementioned DRS~\cite{azadi2018discriminator}, DOT~\cite{tanaka2019discriminator} and MH-GAN~\cite{turner2018metropolis} into consideration. The choices of EBM and GANs are due to their close relation to our proposed method, as analyzed in Section \ref{sec:methodology}. We omit other previous GAN methods since as a representative of a state-of-the-art GAN model, SNGAN and Progressive GAN has been shown to rival or outperform several former methods such as the original GAN \cite{goodfellow2014generative}, the energy-based generative adversarial network \cite{zhao2016energy}, and the original WGAN with weight clipping \cite{arjovsky2017wasserstein}.

\textbf{Evaluation Metrics.}
For evaluation, we concentrate on comparing the quality of generated images since it is well known that GAN models cannot perform reliable likelihood estimations \cite{theis2015note}. We choose to compare the Inception Scores \cite{salimans2016improved} and Frechet Inception Distances (FID) \cite{heusel2017gans} reached during training iterations, both computed from 50K samples. A high image quality corresponds to high Inception and low FID scores. 
Specifically, the intuition of IS is that high-quality images should lead to high confidence in classification, while FID aims to measure the computer-vision-specific similarity of generated images to real ones through Frechet distance.

\textbf{Data.}
We use CIFAR-10 \cite{krizhevsky2009learning} and STL-10 \cite{coates2011analysis}, which are all standard datasets widely used in generative literature. STL-10 consists of
unlabeled real-world color images, while CIFAR-10 is provided with class labels, which enables us to conduct conditional generation tasks. For STL-10, we also shrink the images into $32\times 32$ as in previous works. The pixel values of all images are rescaled into $[-1, 1]$.

\textbf{Network Architecture.}
For all experiment settings, we follow Spectral Normalization GAN (SNGAN) \cite{miyato2018spectral} and adopt the same Residual Network (ResNet) \cite{he2016deep} structures and hyperparameters, which presently is the state-of-the-art implementation of WGAN. Details can be found in Appendix.~\ref{app:sec:network-arch}. We take their open-source code and pre-trained model as the base model for the experiments on CIFAR-10. For STL-10, since there is no pre-trained model available to reproduce the results, we train the SNGAN from scratch and take it as the base model.

\begin{figure*}[t]
\begin{minipage}{0.55\textwidth}
\begin{center}
\begin{adjustbox}{max width=.9\linewidth}
\begin{tabular}{lcc}
        \toprule
        Model & Inception & FID\\
        \midrule
        \multicolumn{3}{l}{\textbf{CIFAR-10 Unconditional}} \\
        \midrule
        PixelCNN~\cite{van2016conditional} & $4.60$ & $65.93$\\
        PixelIQN~\cite{ostrovski2018autoregressive} & $5.29$ & $49.46$\\
        EBM~\cite{du2019implicit} & $6.02$ & $40.58$ \\
        WGAN-GP~\cite{gulrajani2017improved} & $7.86 \pm .07$ & $18.12$\\
        MoLM~\cite{ravuri2018learning} & $7.90\pm .10$ & $\mathbf{18.9}$\\
        SNGAN~\cite{miyato2018spectral} & $8.22\pm .05$ & $21.7$ \\
        ProgressiveGAN~\cite{karras2017progressive} & $8.80\pm .05$ & - \\
        NCSN~\cite{song2019generative} & {$8.87 \pm .12$} & $25.32$\\
        \midrule
        DCGAN w/ DRS(cal)~\cite{azadi2018discriminator} & $3.073 $ & - \\
        DCGAN w/ MH-GAN(cal)~\cite{turner2018metropolis} & $3.379 $ & - \\
        ResNet-SAGAN w/ DOT~\cite{tanaka2019discriminator} & $8.50 \pm .12$ & $19.71$\\
        \midrule
        \textbf{SNGAN-DCD (Pixel)} & {$8.54 \pm .11$} & $21.67$\\
        \textbf{SNGAN-DCD (Latent)} & {$\mathbf{9.11} \pm .04$} & $\mathbf{16.24}$\\
        \bottomrule
        \toprule
        \multicolumn{3}{l}{\textbf{CIFAR-10 Conditional}} \\
        \midrule
        EBM~\cite{du2019implicit} & $8.30$ & $37.9$ \\
        SNGAN~\cite{miyato2018spectral} & $8.43 \pm .09$ & $15.43$\\
        \textbf{SNGAN-DCD (Pixel)} & {$8.73 \pm .13$} & $22.84$\\
        \textbf{SNGAN-DCD (Latent)} & {$8.81 \pm .11$} & $15.05$\\
        BigGAN~\cite{brock2018large} & $\mathbf{9.22}$ & $\mathbf{14.73}$\\
        \bottomrule
    \end{tabular}
\end{adjustbox}
\end{center}
\captionof{table}{Inception and FID scores for CIFAR-10.} \label{tab:score-cifar}
\end{minipage}
\hspace{+5pt}
\begin{minipage}{0.4\textwidth}
    \centering
    \includegraphics[width=0.99\columnwidth]{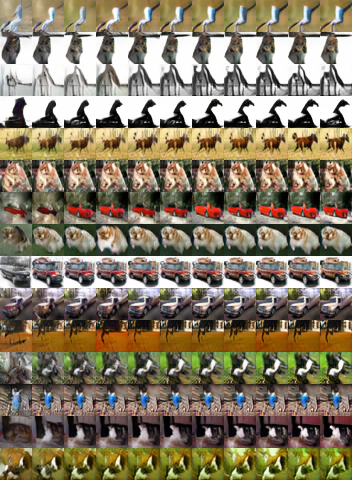}
    \caption{Unconditional CIFAR-10 Langevin dynamics visualization.}
    \label{fig:mcmc-cifar}
\end{minipage}
\end{figure*}

\subsubsection{Results}

\begin{wraptable}{r}{0.5\textwidth}
\vspace{-10pt}
\begin{center}
\begin{adjustbox}{max width=1.2\linewidth}
\begin{tabular}{lcc}
        \toprule
        Model & Inception & FID\\
        \midrule
        SNGAN~\cite{miyato2018spectral} & $8.90\pm .12$ & $18.73$ \\
        \textbf{SNGAN-DCD (Pixel)} & {$9.25 \pm .09$} & $22.25$\\
        \textbf{SNGAN-DCD (Latent)} & {$\mathbf{9.33} \pm .04$} & $17.68$\\
        \bottomrule
    \end{tabular} 
\end{adjustbox}
\end{center}
\caption{Inception and FID scores for STL-10} \label{tab:score-stl}
\end{wraptable}

For quantitative evaluation, we report the inception score~\cite{salimans2016improved} and FID \cite{heusel2017gans} scores on CIFAR-10 in Tab.~\ref{tab:score-cifar} and STL-10 in Tab.~\ref{tab:score-stl}. As shown in the Tab.~\ref{tab:score-cifar}, in pixel space, by introducing the proposed DCD algorithm, we achieve a significant improvement of inception score over the SNGAN. The reported inception score is even higher than most values achieved by class-conditional generative models. Our FID score of $21.67$ on CIFAR-10 is competitive with other top generative models. When the DCD is conducted in the latent space, we further achieve a $9.11$ inception score and a $16.24$ FID, which is a new state-of-the-art performance of IS. When combined with label information to perform conditional generation, we further improve the FID to $15.05$, which is comparable with current state-of-the-art large-scale trained models~\cite{brock2018large}. Some visualization of generated examples can be found in Fig~\ref{fig:mcmc-cifar}, which demonstrates that the Markov chain is able to generate more realistic samples, suggesting that the MCMC process is meaningful and effective. Tab.~\ref{tab:score-stl} shows the performance on STL-10, which demonstrates that as a generalized method, DCD is not over-fitted to the specific data CIFAR-10. More experiment details and the generated samples of STL-10 can be found in Appendix.~\ref{app:sec:mcmc-stl}.

\section{Discussion and Future Work}
Based on the density ratio estimation perspective, the discriminator in $f$-GANs could be adapted to a wide range of application scenarios, such as mutual information estimation~\cite{hjelm2018learning} and bias correction of generative models~\cite{grover2019bias}.  However, as another important branch in GANs' research, the available information in WGANs discriminator is less explored. In this paper, we narrow down the scope of discussion and focus on the problem of how to leverage the discriminator of WGANs to further improve the sample quality in image generation. We conduct a comprehensive theoretical study on the informativeness of discriminator in different kinds of GANs. Motivated by the theoretical progress in the literature of WGANs, we investigate the possibility of turning the discriminator of WGANs into an energy function and propose a fine-tuning procedure of WGANs named as "discriminator contrastive divergence". The final image generation process is semi-amortized, where the generator acts as an initial state, and then several steps of Langevin dynamics are conducted.  We demonstrate the effectiveness of the proposed method on several tasks, including both synthetic and real-world image generation benchmarks. 

It should be noted that the semi-amortized generation allows a trade-off between the generation quality and sampling speed, which holds a slower sampling speed than a direct generation with a generator. Hence the proposed method is suitable to the application scenario where the generation quality is given vital importance.  Another interesting observation during the experiments is the discriminator contrastive divergence surprisingly reduces the occurrence of adversarial samples during training, so it should be a promising future direction to investigate the relationship between our method and bayesian adversarial learning. 

We hope our work helps shed some light on a generalized view to a method of connecting different GANs and energy-based models, which will stimulate more exploration into the potential of current deep generative models.

\newpage

\bibliography{99_bibliography}
\bibliographystyle{plainnat}

\onecolumn
\appendix
\section{Proof of Theorem~\ref{gradient_direction}}\label{proof1}
It should be noticed that Theorem.~\ref{gradient_direction} can be generalized to that Lipschitz continuity with $l_2$-norm (Euclidean Distance) can guarantee that the gradient is directly pointing towards some sample\cite{zhou2019lipschitz}. We introduce the following lemmas, and Theorem.~\ref{gradient_direction} is a special case.  

Let $(x, y)$ be such that $y\neq x $, and we define $x_t = x+ t\cdot (y-x)$ with $t \in [0,1]$. 

\begin{lemma}
If $f(x)$ is $k$-Lipschitz with respect to $\Vert . \Vert_p$ and $f(y)-f(x) = k\Vert y - x \Vert_p$, then $f(x_t) = f(x)+t\cdot k\Vert y - x \Vert_p$.
\end{lemma}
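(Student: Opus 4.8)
The plan is to prove the statement by a squeeze argument: $k$-Lipschitzness gives an upper bound on the increment of $f$ across each of the two sub-segments $[x,x_t]$ and $[x_t,y]$, and because $x_t$ lies \emph{on} the segment from $x$ to $y$ these two bounds add up exactly to $k\Vert y-x\Vert_p$, which by hypothesis is the increment actually attained by $f$ from $x$ to $y$; hence both bounds must be tight, and in particular the one on $[x,x_t]$ is tight, which is the claim.

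First I would record the elementary identities coming from $x_t = x + t(y-x)$, namely $x_t - x = t(y-x)$ and $y - x_t = (1-t)(y-x)$, so that positive homogeneity of $\Vert\cdot\Vert_p$ together with $t\in[0,1]$ gives $\Vert x_t - x\Vert_p = t\Vert y-x\Vert_p$ and $\Vert y - x_t\Vert_p = (1-t)\Vert y-x\Vert_p$. Next, applying the Lipschitz condition to the pairs $(x,x_t)$ and $(x_t,y)$ yields
\[
f(x_t) - f(x) \le k\Vert x_t - x\Vert_p = tk\Vert y-x\Vert_p, \qquad f(y) - f(x_t) \le k\Vert y - x_t\Vert_p = (1-t)k\Vert y-x\Vert_p.
\]
Adding these gives $f(y)-f(x) \le k\Vert y-x\Vert_p$; but $f(y)-f(x) = k\Vert y-x\Vert_p$ by assumption, so the two left-hand sides sum to the same value as the two right-hand sides. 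Since each left-hand summand is bounded above by the corresponding right-hand summand, equality of the sums forces equality termwise, so $f(x_t) - f(x) = tk\Vert y-x\Vert_p$, i.e. $f(x_t) = f(x) + t\cdot k\Vert y-x\Vert_p$.

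I do not expect a real obstacle here; the proof is a routine sandwich. The only two points deserving a word of care are (i) that $x_t$ being a convex combination of $x$ and $y$ makes the distances additive, $\Vert x_t-x\Vert_p + \Vert y-x_t\Vert_p = \Vert y-x\Vert_p$, rather than merely satisfying the triangle inequality — this exactness is what makes the two Lipschitz bounds tight simultaneously — and (ii) the elementary fact that if $a \le A$, $b \le B$, and $a+b = A+B$, then $a=A$ and $b=B$. The same reasoning, run with $x_t$ in the role of an endpoint, shows $f$ is affine in $t$ along the entire segment, which is the form in which the subsequent lemmas will use it.
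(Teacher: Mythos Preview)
Your proof is correct and follows essentially the same approach as the paper: split the increment $f(y)-f(x)$ at $x_t$, bound each piece by $k$-Lipschitzness using $\Vert x_t-x\Vert_p = t\Vert y-x\Vert_p$ and $\Vert y-x_t\Vert_p = (1-t)\Vert y-x\Vert_p$, and observe that the two bounds sum to exactly $k\Vert y-x\Vert_p$, forcing each inequality to be an equality. The paper presents this as a single chain of inequalities rather than two separate bounds added together, but the content is identical.
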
 

\begin{proof}
As we know $f(x)$ is $k$-Lipschitz, with the property of norms, we have
\begin{align}\label{eq:linear_interopation}
f(y)-f(x) &= f(y)-f(x_t)+ f(x_t)-f(x) \nonumber \\
& \leq f(y)-f(x_t)+k\Vert x_t-x\Vert_p  = f(y)-f(x_t)+t\cdot k\Vert y - x\Vert_p \nonumber \\
& \leq k\Vert y-x_t\Vert_p+t\cdot k\Vert y - x\Vert_p  = k \cdot (1-t)\Vert y - x\Vert_p+t \cdot k\Vert y - x\Vert_p \nonumber \\
& = k \Vert y - x\Vert_p. 
\end{align}
$f(y)-f(x) = k\Vert y - x \Vert_p$ implies all the inequalities is equalities. Therefore, $f(x_t) = f(x)+t\cdot k\Vert y - x \Vert_p$. \qedhere
\end{proof}

\begin{lemma}
Let $v$ be the unit vector $\frac{y-x}{\Vert y -x \Vert_2}$. If $f(x_t) = f(x)+t\cdot k\Vert y - x \Vert_2$, then $\tpdv{f(x_t)}{v}$ equals to $k$. 
\end{lemma}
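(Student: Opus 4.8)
The plan is to reduce the directional derivative to an ordinary one-dimensional derivative along the ray through $x$ in the direction $v$. First I would observe that, since $v = \frac{y-x}{\Vert y-x\Vert_2}$, the point $x_t = x + t(y-x)$ can be rewritten as $x_t = x + s\,v$ with $s = s(t) := t\Vert y-x\Vert_2$; as $t$ ranges over $[0,1]$, $s$ ranges over $[0,\Vert y-x\Vert_2]$. I then introduce the scalar function $g(s) := f(x+s\,v)$ on this interval, and note that the hypothesis $f(x_t) = f(x)+t\cdot k\Vert y-x\Vert_2$ is exactly the statement $g(s) = f(x) + k\,s$, i.e. $g$ is affine with slope $k$. (This is precisely the conclusion of the preceding lemma specialized to $p=2$, so the two lemmas chain together directly.)

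Next I would unwind the definition of the directional derivative. By definition, $\tpdv{f(x_t)}{v} = \lim_{h\to 0}\frac{f(x_t + h v) - f(x_t)}{h}$. Writing $x_t + h v = x + (s+h)\,v$, this limit equals $\lim_{h\to 0}\frac{g(s+h) - g(s)}{h} = g'(s)$, which is nothing but the chain rule for the affine reparametrization $s = t\Vert y-x\Vert_2$. Since $g(s) = f(x) + k\,s$ is affine, $g'(s) = k$ for every interior $s$, and therefore $\tpdv{f(x_t)}{v} = k$.

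The only place that calls for a little care — and it is the closest thing to an obstacle in an otherwise one-line computation — is the behaviour at the endpoints $t\in\{0,1\}$, where only a one-sided increment $h$ keeps $s+h$ inside the interval $[0,\Vert y-x\Vert_2]$ on which $g$ is known to be affine; there one interprets $\tpdv{f(x_t)}{v}$ as the corresponding one-sided derivative, or simply restricts attention to $t\in(0,1)$, which is all that is needed for the application in Theorem~\ref{gradient_direction}. Everything else is immediate, using nothing beyond the affine reparametrization and the fact that an affine function has constant derivative.
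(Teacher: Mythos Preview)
Your proof is correct and follows essentially the same approach as the paper: both compute the directional derivative directly from its definition, using the observation that $x_t + h v = x_{t + h/\Vert y-x\Vert_2}$ so that the hypothesis $f(x_t)=f(x)+t\cdot k\Vert y-x\Vert_2$ turns the difference quotient into $\frac{(h/\Vert y-x\Vert_2)\cdot k\Vert y-x\Vert_2}{h}=k$. Your packaging via the auxiliary affine function $g(s)=f(x+sv)$ and the endpoint remark are slight elaborations, but the argument is identical in substance.
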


\begin{proof}
\begin{equation}
\begin{aligned}
\vspace{-20pt}
\tpdv{f(x_t)}{v} 
&= \lim\limits_{h\rightarrow0} \frac{f(x_t+hv)-f(x_t)}{h} =\lim\limits_{h\rightarrow0} \frac{f(x_t+h\frac{y-x}{\Vert y-x \Vert_2})-f(x_t)}{h} \\ 
& =\lim\limits_{h\rightarrow0}\frac{f(x_{t+\frac{h}{\Vert y-x \Vert_2}})-f(x_t)}{h}  =\lim\limits_{h\rightarrow0}\frac{\frac{h}{\Vert y-x \Vert_2}\cdot k\Vert y-x \Vert_2}{h}=k. \nonumber \qedhere
\end{aligned}
\end{equation}
\end{proof}
Then we derive the formal proof of Theorem~\ref{gradient_direction}. 

\begin{proof} $ $ 
Assume $p=2$, if $f(x)$ is $k$-Lipschitz with respect to $\Vert.\Vert_2$ and $f(x)$ is differentiable at $x_t$, then $\Vert \nabla f(x_t) \Vert_2 \leq k$.  Let $v$ be the unit vector $\frac{y-x}{\Vert y -x \Vert_2}$. We have 
\begin{align}
k^2 =k\tpdv{f(x_t)}{v} &=k\left<v,\nabla f(x_t)\right>= \left<kv, \nabla f(x_t) \right> \leq \Vert kv \Vert_2\Vert \nabla f(x_t) \Vert_2 = k^2.
\end{align}
Because the equality holds only when $\nabla f(x_t) = kv = k\frac{y-x}{\Vert y -x \Vert_2}$, we have that $\nabla f(x_t) = k\frac{y-x}{\Vert y -x \Vert_2}$.
\end{proof}

\section{Proof of Theorem~\ref{opt_fail}}\label{proof2}
Theorem.~\ref{opt_fail} states that following the following procedure as introduced in \cite{tanaka2019discriminator}, there is non-unique stationary distribution. The complete procedure is to find the following $y$ for $x \sim P_{G_\theta}$:
\begin{align}
    y^*= \argmin_{x} \{\| x- y\|_2- D(x)\}.
\end{align}
To find the corresponding $y ^*$, the following gradient based update is conducted:
\begin{align}
 \{x \leftarrow x - \epsilon  \nabla_{ x} \left\{ || x - y||_2 -  D(x) \right\}.
\end{align}
For all the points $x_t$ in the linear interpolation of $x$ and target $y^*$ as defined in the proof of Theorem~\ref{gradient_direction},
\begin{align}
    \nabla_{ x_t} \left\{ || x_t - y||_2 -  D( x_t) \right\} = \frac{y-x}{\Vert y -x \Vert_2} - \frac{y-x}{\Vert y -x \Vert_2} = 0,
\end{align}
which indicates all points in the linear interpolation  satisfy the stationary condition.

\section{Proof of Proposition~\ref{prop1}}\label{proof3}

Proposition.~\ref{prop1} is the direct result of the following Lemma.~\ref{lem:dd}. Following \cite{li2017approximate}, we provide the complete proof as following.

\begin{lemma}\label{lem:dd}
\cite{cover2012elements}
Let $q$ and $r$ be two distributions for $\bm{z}_0$. Let $q_t$ and $r_t$ be the corresponded distributions of state $\bm{z}_t$ at time $t$, induced by the transition kernel $\mathcal{K}$. Then $\mathrm{D}_{\text{KL}}[q_t||r_t] \geq \mathrm{D}_{\text{KL}}[q_{t+1}||r_{t+1}]$ for all $t \geq 0$.
\end{lemma}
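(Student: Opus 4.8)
\textbf{Proof proposal for Lemma~\ref{lem:dd} (monotonicity of KL under a Markov kernel).}

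The plan is to prove the data-processing inequality for KL divergence specialized to a single application of the Markov transition kernel $\mathcal{K}$, and then iterate. The key observation is that the joint distribution of the pair $(\bm{z}_t, \bm{z}_{t+1})$ factors in two ways: as $q_t(x')\,\mathcal{K}(x\mid x')$ and, reversing the roles, the joint KL between the two pair-distributions (one built from $q_0$, one from $r_0$) decomposes via the chain rule for relative entropy. First I would write, for the two joint laws $Q(x',x) = q_t(x')\mathcal{K}(x\mid x')$ and $R(x',x) = r_t(x')\mathcal{K}(x\mid x')$ over $(\bm{z}_t,\bm{z}_{t+1})$, the chain-rule identity
\begin{align*}
  \KL(Q\,\|\,R) &= \KL(q_t\,\|\,r_t) + \E_{x'\sim q_t}\!\big[\KL(\mathcal{K}(\cdot\mid x')\,\|\,\mathcal{K}(\cdot\mid x'))\big] \\
  &= \KL(q_t\,\|\,r_t),
\end{align*}
since the conditional kernels are identical and hence the conditional KL term vanishes. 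Next I would decompose the same joint KL the other way, conditioning first on $\bm{z}_{t+1}$: writing $Q(x',x) = q_{t+1}(x)\,Q(x'\mid x)$ and $R(x',x) = r_{t+1}(x)\,R(x'\mid x)$ with the appropriate posterior conditionals, the chain rule gives
\begin{align*}
  \KL(Q\,\|\,R) = \KL(q_{t+1}\,\|\,r_{t+1}) + \E_{x\sim q_{t+1}}\!\big[\KL(Q(\cdot\mid x)\,\|\,R(\cdot\mid x))\big].
\end{align*}
Since the second term is an expectation of a KL divergence it is nonnegative, so combining the two displays yields $\KL(q_t\,\|\,r_t) \geq \KL(q_{t+1}\,\|\,r_{t+1})$, which is exactly the claim for one step; induction on $t$ gives it for all $t\geq 0$.

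To then recover Proposition~\ref{prop1} I would instantiate $q = p_{G_\theta}$ and $r = p_{D_\phi}$, note that $p_{D_\phi}$ is the stationary distribution of the Langevin kernel $\mathcal{K}$ (so $r_t = p_{D_\phi}$ for all $t$), and observe that running the chain from $p_{G_\theta}$ decreases $\KL(\,\cdot\,\|\,p_{D_\phi})$ monotonically toward zero — i.e. the MCMC fine-tuning step is performing exactly the descent on $\KL(p_{G_\theta}\,\|\,p_{D_\phi})$ that the Fenchel-dual objective in Eq.~\ref{fenchel_mle} prescribes, which is the content of the proposition (citing \cite{kim2016deep} for the identification of the gradient update with this KL).

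The main obstacle is technical rather than conceptual: justifying the chain-rule decompositions requires that the relevant posterior conditionals $Q(x'\mid x)$ and $R(x'\mid x)$ be well defined (absolute continuity of $q_t,r_t$ and of the kernel, finiteness of the divergences) so that no $\infty - \infty$ arises; I would handle this by assuming, as is standard in this MCMC setting, that all distributions have densities with full support and that $\KL(q_t\,\|\,r_t)<\infty$, under which all terms are finite and the manipulations are valid. The identity $r_t = p_{D_\phi}$ for all $t$ uses only stationarity of $\mathcal{K}$, which is the defining property stated in Section~\ref{A}, so no extra work is needed there.
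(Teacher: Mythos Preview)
Your proposal is correct and follows essentially the same approach as the paper: both arguments form the joint law of $(\bm{z}_t,\bm{z}_{t+1})$ under $q$ and $r$, use the chain rule for KL in the forward direction (where the shared kernel $\mathcal{K}$ makes the conditional term vanish, giving $\KL(Q\|R)=\KL(q_t\|r_t)$), and then in the reverse direction to obtain $\KL(q_{t+1}\|r_{t+1})$ plus a nonnegative expected conditional KL. The paper's display is just a compressed chain of equalities expressing exactly your two decompositions.
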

\begin{proof}
\begin{equation*}
\begin{aligned}
\mathrm{D}_{\text{KL}}[q_t||r_t] &= \mathbb{E}_{q_t}\left[ \log \frac{q_t(\bm{z}_t)}{r_t(\bm{z_t})} \right] \\
&= \mathbb{E}_{q_t(\bm{z}_t) \mathcal{K}(\bm{z}_{t+1}|\bm{z}_t) }\left[ \log \frac{q_t(\bm{z}_t) \mathcal{K}(\bm{z}_{t+1}|\bm{z}_t) }{r_t(\bm{z}_t) \mathcal{K}(\bm{z}_{t+1}|\bm{z}_t)} \right] \\
&= \mathbb{E}_{q_{t+1}(\bm{z}_{t+1}) q_{t+1}(\bm{z}_t|\bm{z}_{t+1})}\left[ \log \frac{q_{t+1}(\bm{z}_{t+1}) q(\bm{z}_{t}|\bm{z}_{t+1}) }{r_{t+1}(\bm{z}_{t+1}) r(\bm{z}_{t}|\bm{z}_{t+1})} \right] \\
&= \mathrm{D}_{\text{KL}}[q_{t+1}||r_{t+1}] + \mathbb{E}_{q_{t+1}} \mathrm{D}_{\text{KL}}[q_{t+1}(\bm{z}_t | \bm{z}_{t+1}) || r_{t+1}(\bm{z}_t | \bm{z}_{t+1})].
\end{aligned}
\end{equation*}
\end{proof}


\section{Network architectures}
\label{app:sec:network-arch}

ResNet architectures for CIFAR-10 and STL-10 datasets. We use similar architectures to the ones used in~\cite{gulrajani2017improved}.

\begin{table}[!h]
\centering
\begin{tabular}{c}
    \toprule
    \midrule
    $z\in \mathbb{R}^{128} \sim \mathcal{N}(0, I)$ \\
    \midrule
    dense, $4 \times 4 \times 256$ \\
    \midrule
    ResBlock up 256\\
    \midrule
    ResBlock up 256\\
    \midrule
    ResBlock up 256\\
    \midrule
    BN, ReLU, 3$\times$3 conv, 3 Tanh\\
    \midrule
    \bottomrule
\end{tabular}
\vspace{+5pt}
\caption{Generator}
\end{table}

\begin{table}[!h]
\centering
\begin{tabular}{c}
    \toprule
    \midrule
    RGB image $x\in \mathbb{R}^{32\times 32 \times 3}$ \\
    \midrule
    ResBlock down 128\\
    \midrule
    ResBlock down 128\\
    \midrule
    ResBlock 128\\
    \midrule
    ResBlock 128\\
    \midrule
    ReLU\\
    \midrule
    Global sum pooling\\
    \midrule
    dense $\rightarrow$ 1\\
    \midrule
    \bottomrule
\end{tabular}
\vspace{+5pt}
\caption{Discriminator}
\end{table}


\section{Discussions on Objective Functions}
\label{app:obj}
Optimization of the standard objective of WGAN, \emph{i.e.} with $r(x) = m(x) = x$ in Eq.~\ref{gan_obj}, are found to be unstable  due to the numerical issues and free offset~\cite{zhou2019lipschitz,miyato2018spectral}. Instead, several surrogate losses are actually used in practice. For example, the logistic loss($r(x) = m(x) = -\log(1+e^{-x})$) and hinge loss($r(x) = m(x) = \min (0,x)$) are two widely applied objectives. Such surrogate losses are valid due to that they are actually the lower bounds of the Wasserstain distance between the two distributions of interest. The statement can be easily derived by the fact that $-\log(1+e^{-x}) \leq x$ and $\min (0,x) \leq x$. A more detailed discussion could also be found in \cite{tanaka2019discriminator}.

Note that $\min (0,-1+x)$ and $-\log(1+e^{-x})$ are in the function family proposed in \cite{zhou2019lipschitz}, and Theorem 4 in \cite{zhou2019lipschitz} guarantees the gradient property of discriminator.

\section{More Experiment Details}
\label{app:sec:mcmc-stl}

\subsection{CIFAR-10}
For the meta-parameters in DCD Algorithm~\ref{DCD}, when the MCMC process is conducted in the pixel space, we choose $6-8$ as the number of MCMC steps $K$, and set the step size $\epsilon$ as $10$ and the standard deviation of the Gaussian noise as $0.01$, while for the latent space we set $K$ as $50$, $\epsilon$ as $0.2$ and the deviation as $0.1$. Adam optimizer \cite{kingma2014adam} is set with $2\times 10^{-4}$ learning rate with $\beta_1=0,\beta_2=0.9$. We use $5$ critic updates per generator update, and a batch size of $64$.

\subsection{STL-10}
We show generated samples of DCD during Langevin dynamics in Fig.~\ref{app:fig:stl}. We run 150 steps of MCMC steps
and plot generated sample for every 10 iterations. The step size is set as $0.05$ and the noise is set as $N(0,0.1)$.

\begin{figure}[!h]
    \centering
    \includegraphics[width=1.0\columnwidth]{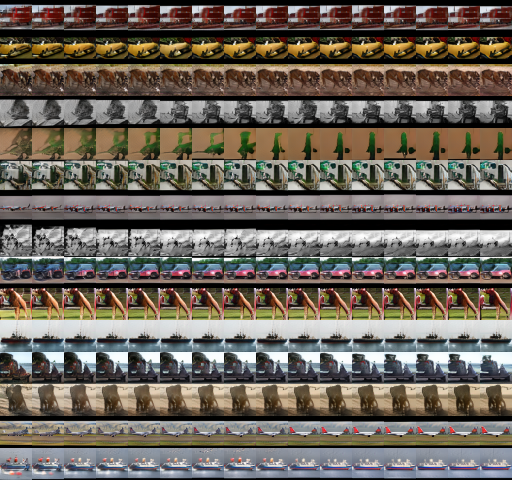}
    \caption{STL-10 Langevin dynamics visualization.}
    \label{app:fig:stl}
\end{figure}

\end{document}